\documentclass[11pt]{article}
	
	\newcommand{\blind}{0}
	
    \makeatletter
    \renewcommand\section{\@startsection {section}{1}{\z@}%
                                       {-3.5ex \@plus -1ex \@minus -.2ex}%
                                       {2.3ex \@plus.2ex}%
                                       {\normalfont\fontfamily{phv}\fontsize{16}{19}\bfseries}}
    \renewcommand\subsection{\@startsection{subsection}{2}{\z@}%
                                         {-3.25ex\@plus -1ex \@minus -.2ex}%
                                         {1.5ex \@plus .2ex}%
                                         {\normalfont\fontfamily{phv}\fontsize{14}{17}\bfseries}}
    \renewcommand\subsubsection{\@startsection{subsubsection}{3}{\z@}%
                                        {-3.25ex\@plus -1ex \@minus -.2ex}%
                                         {1.5ex \@plus .2ex}%
                                         {\normalfont\normalsize\fontfamily{phv}\fontsize{14}{17}\selectfont}}
    \makeatother
	
	\usepackage{amsmath}
    \usepackage{amssymb,amsthm}
	\usepackage{graphicx}
	\usepackage{enumerate}
	\usepackage{natbib} 
	\usepackage{url} 
    \usepackage{xcolor}
    \usepackage{subcaption}
    \usepackage{algorithm}
    \usepackage{algpseudocode}
    \usepackage{float}
    \usepackage{booktabs}
    \usepackage{threeparttable}
    \usepackage{multirow}
    \usepackage{longtable}
    \usepackage{booktabs}
    \usepackage[most]{tcolorbox} 
    \usepackage{fvextra}         
    \usepackage{xcolor}         
    \usepackage{ragged2e}
    \usepackage{fvextra}
    \usepackage{multirow}
    \usepackage{enumitem}
    \usepackage[colorlinks=true,linkcolor=blue,citecolor=blue,urlcolor=blue]{hyperref}
    \newtheorem{theorem}{Theorem}[section]
    \newtheorem{definition}{Definition}[section]
    
    \usepackage{adjustbox}
    \algrenewcommand\algorithmicrequire{\textbf{Inputs:}}
    \algrenewcommand\algorithmicensure{\textbf{Outputs:}}


\begin{document}
		
		\def\spacingset#1{\renewcommand{\baselinestretch}%
			{#1}\small\normalsize} \spacingset{1}

    \if0\blind
    {
      \title{LLM-Driven Joint Evolution of Coupled Heuristics Components for Routing Optimization}
      }
      \author{%
        Juntao Wei$^{a}$,
        Yangming Zhou$^{a}$,
        Zhibin Jiang$^{a,*}$,
        Shan Jiang$^{b}$\\[5pt]
        \small
        \begin{tabular}{@{}l@{}}
        $^{a}$ Antai College of Economics and Management,
        Shanghai Jiao Tong University, Shanghai 200030, China\\
        $^{b}$ Global Institute of Future Technology,
        Shanghai Jiao Tong University, Shanghai 200240, China
        \end{tabular}
        }
      \date{}
      \maketitle
    \fi

		\if1\blind
		{

            \title{\bf \emph{IISE Transactions} \LaTeX \ Template}
			\author{Author information is purposely removed for double-blind review}
			
\bigskip
			\bigskip
			\bigskip
			\begin{center}
				{\LARGE\bf \emph{IISE Transactions} \LaTeX \ Template}
			\end{center}
			\medskip
		} \fi
	\begin{abstract}

Heuristic design for combinatorial optimization remains heavily reliant on expert knowledge, while existing large language model (LLM)-enhanced evolutionary methods typically evolve isolated algorithmic components, even when one determines the search state on which another operates. This paper proposes LLM-driven Heuristic Components Joint Generation (LLM-HCJG), a population-based framework that jointly generates and co-evolves interdependent heuristic components under a shared design blueprint. Applied to guided local search (GLS), LLM-HCJG couples solution initialization with penalty construction and embeds the generated pair into an enhanced online search mechanism. The resulting form is further transferred from the traveling salesman problem (TSP) to the capacitated vehicle routing problem (CVRP). Theoretical analysis establishes the non-separable state-transition effects between the two components and the advantage in generation consistency. Across synthetic instances and 41 public TSPLIB/CVRPLIB benchmarks, LLM-HCJG attains consistently low optimality gaps, including best or tied-best results on 28 of 29 TSPLIB instances and all 12 CVRPLIB instances. Ablation and structural analyses further indicate that these gains are associated with cross-component compatibility and alignment rather than isolated-component recombination. These results support effective cross-instance transfer within the evaluated routing settings under limited-sample, modest-cost training.

	\end{abstract}
			
	\noindent%
	{\it Keywords:} Large language models; Automatic algorithm design; Guided local search; Joint co-evolution; Routing optimization.

	\spacingset{1.5} 

\section{Introduction}
\label{Sec:Introduction}
Combinatorial optimization problems are ubiquitous in logistics and transportation systems \citep{song2025dualsourcing_tracking,cheng2025robust_crowdsourced_delivery}. Such problems require repeated generation of high-quality routing or scheduling decisions under stringent operational constraints, often in dynamic and uncertain environments where solution quality directly impacts operational efficiency and service performance.

As problem scale and structural complexity increase, exact algorithms become computationally infeasible \citep{zhen2023branch}, rendering heuristic algorithms indispensable in practice \citep{blum2003metaheuristics}. However, achieving strong performance with these methods typically requires extensive domain knowledge and carefully handcrafted rules, which can limit adaptability and lead to premature convergence \citep{bengio2021machine}. Meanwhile, many recent learning-based approaches are highly task-specific and often require substantial retraining when problem settings change \citep{kim2024multi}. Consequently, a persistent gap remains between the complexity of real-world environments and the practical deployability of existing solution methods, hindering the development of truly adaptive and scalable optimization frameworks.

Recent advances in large language models (LLMs) expand optimization research through improved reasoning, code generation, and knowledge integration \citep{vaswani2017attention, guo2025deepseek}. A potential paradigm is emerging, where LLMs serve not only as problem solvers but also as algorithm designers to address the limits of human intuition and handcrafted engineering \citep{huang2024large}. In parallel, combining LLMs with evolutionary computation (EC) yields LLM-enhanced EC, in which prompt-based EC drives LLMs to generate and evolve program codes, making for algorithm-level search beyond conventional solution optimization \citep{wu2024evolutionary}.

Currently, the main focus is on embedding highly competitive LLM-based components into fixed heuristics to iteratively improve existing algorithms \citep{zhang2024understanding}. Following this trend, this paper investigates an LLM-driven guided local search (GLS) framework. GLS alternates between neighborhood-based improvement and penalty-driven perturbation, with the latter enabling escape from local optima and encouraging exploration. Typically, two bottleneck functions jointly determine the overall search direction: (i) solution initialization: constructing the starting route and strongly influencing the subsequent search trajectory; and (ii) penalty construction: defining distance matrix update rules that allow the search to escape local minima while preserving solution structure. These two inherently interdependent components traditionally render substantial domain expertise essential to effective design.

To address these limitations, we propose the \textbf{LLM}-driven \textbf{H}euristic \textbf{C}omponents \textbf{J}oint \textbf{G}eneration (LLM-HCJG) framework. It advances a distinct dimension of LLM-based heuristic design by explicitly coordinating search-structurally interdependent components (solution-initialization and penalty-construction functions) within a fixed heuristic framework through a shared design blueprint and joint evolution. The main contributions of this work are summarized as follows:
\begin{itemize}
\item \textbf{Joint algorithm-level representation for LLM-driven heuristic design.} LLM-HCJG represents each individual as a complete heuristic organized around a shared design blueprint and composed of multiple coupled components, rather than as a solution or an isolated rule. This enables algorithm-level search while explicitly modeling the coevolution of interdependent components of search direction.

\item \textbf{Theoretical foundation for coupled component generation.} We establish that solution initialization and penalty construction are non-separable in the search-state transition, justifying the complete component pair as the basic unit of the evolution loop. Under the blueprint-consistency condition, the proposed joint-generation mechanism enforces design-principle alignment and reduces mismatch probability.

\item \textbf{Search-direction redesign of GLS from TSP to CVRP.} The jointly generated components are embedded into a framework-augmented GLS procedure. This design preserves the interpretable backbone while extending the algorithm from TSP to CVRP.

\item \textbf{Strong empirical performance under modest training cost.} On synthetic TSP/CVRP testbeds across the 20-, 50-, and 100-node scales and 41 public TSPLIB (size 51--200) / CVRPLIB (size 32--190) benchmarks, LLM-HCJG confirms the co-evolved structural compatibility via attaining the lowest gaps and best counts of 28/29 and 12/12. These results rely on only five training instances per fitness evaluation, demonstrating that strong heuristics can be generated with limited samples and modest cost.
\end{itemize}

The remainder of this paper is organized as follows. Section~\ref{Sec:Related Work} reviews representative paradigms of LLM-based optimization and highlights the bottlenecks of existing LLM-enhanced evolutionary design. Section~\ref{Sec:Methods} develops LLM-HCJG as a GLS search-direction redesign framework, where coupled component generation and enhanced online search jointly reshape the guided-search process from TSP to CVRP. Section~\ref{Sec:Experiments} characterizes the best-evolved heuristics and evaluates performance through synthetic testbeds, TSPLIB/CVRPLIB benchmarks, plus ablation analyses of component co-adaptation. Section~\ref{Sec:Conclusion} concludes the paper with a discussion of broader implications and future research directions.
\section{Related Work}
\label{Sec:Related Work}
From a dual-role perspective, LLMs contribute in two fundamentally different ways: in the generative role, LLMs directly produce candidate solutions through prompt interpretation; in the evolutionary role, LLMs iteratively refine search strategies based on real-time feedback and explicit operators \citep{yu2024deep}. Consequently, existing studies can be broadly categorized into two categories: LLMs as an automated solution solver and LLMs as an automated algorithm designer, as illustrated in Figures~\ref{fig:paradigm1} and \ref{fig:paradigm2}, respectively. Note that the latter paradigm is commonly referred to as LLM-enhanced EC, to which our work belongs, with a focus on heuristic innovation.
\begin{figure}[htbp]
    \centering
    \begin{subfigure}[b]{0.32\textwidth}
        \centering
        \includegraphics[width=\textwidth]{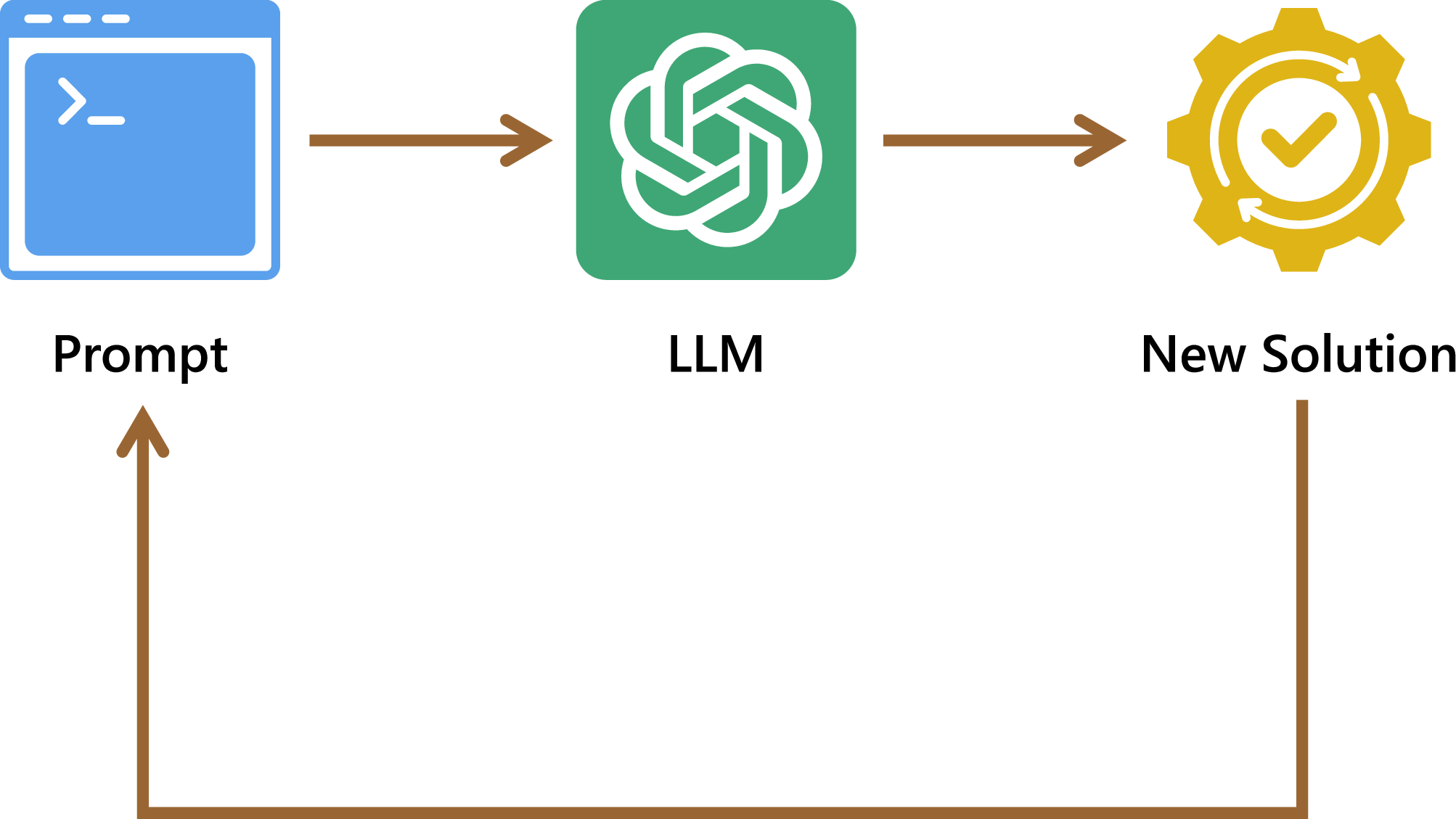}
        \caption{}
        \label{fig:paradigm1}
    \end{subfigure}
    \hspace{0.1\textwidth}
    \begin{subfigure}[b]{0.35\textwidth}
        \centering
        \includegraphics[width=\textwidth]{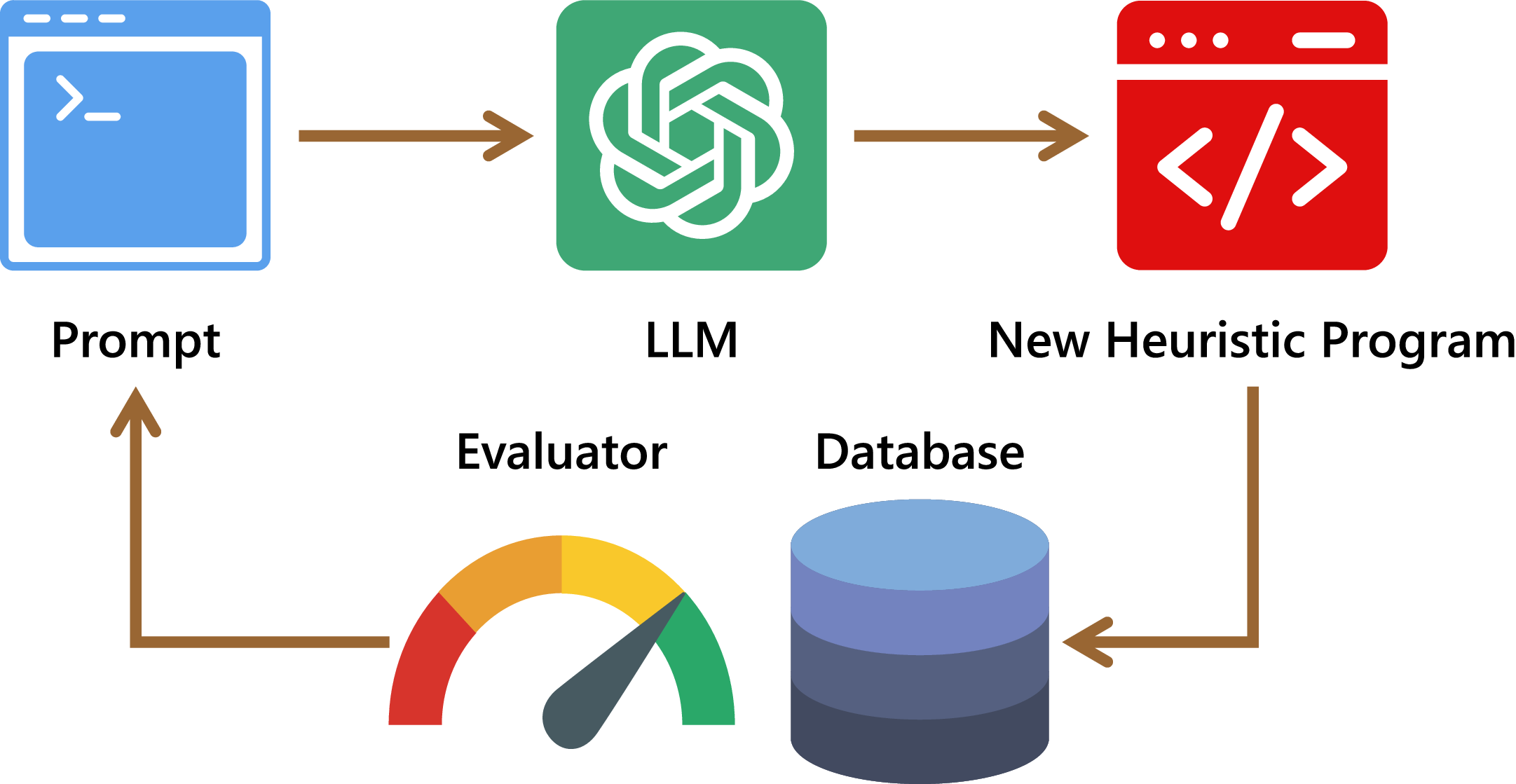}
        \caption{}
        \label{fig:paradigm2}
    \end{subfigure}
    \caption{(a) LLMs as a solution solver; (b) LLMs as an algorithm designer.}
    \label{fig:two_paradigms}
\end{figure}
\subsection{LLMs as an Automated Solution Solver}
\label{sec:slover}
This paradigm treats LLMs as black-box optimizers that directly produce one or more candidate solutions from predefined prompts. Early studies explore the iterative refinement of single solutions. \cite{yang2023large} propose Optimization by PROmpting, where meta-prompts encode solution history, corresponding objective values, and guidance for exploration direction. Similarly, \cite{huang2024words} show that LLMs can directly generate executable vehicle routes via self-reflection mechanisms that improve feasibility and quality. Subsequent work extends to population-level search. \cite{brahmachary2025large} introduce a dual-pool prompting framework that explicitly separates exploration and exploitation, exchanging high-quality candidates via a port-and-filter. Furthermore, \cite{liu2024large} embed a series of evolutionary operators in the prompts to form an LLM-driven population search with adaptive temperature control. Several studies augment solution generation by incorporating multimodal inputs. By merging visual-spatial information with textual constraints, \cite{huang2025multimodal} and \cite{elhenawy2024visual} demonstrate improved performance in CVRP and TSP variants, particularly in capturing geometric structures beyond text-only prompts.

Although this paradigm is attractive, it still suffers from inherent limitations, including high prompt sensitivity, performance degradation due to weak reasoning consistency, and the absence of feedback mechanisms for feasibility or improvement. These limitations indicate that relying solely on LLMs as end-to-end solvers is insufficient.
\subsection{LLMs as an Automated Algorithm Designer}
\label{sec:algorithm designer}
These limitations motivate a shift toward guiding LLMs to design innovative heuristic strategies, including partial or complete algorithms. Early studies adopt primarily a single-round approach, in which LLMs generate heuristics in one shot \citep{sartori2025combinatorial}. However, without performance-driven feedback or iterative refinement, the resulting algorithms function more like conversational than genuine optimization output, and \cite{zhang2024understanding} show that standalone LLMs generation is insufficient for algorithm design.

Motivated by this issue, the LLM-enhanced EC embeds LLM-generated heuristics into an evolutionary framework for iterative refinement. Automated heuristic design is formulated as an evolutionary program search problem, where heuristics are represented as executable programs, evolutionary mechanisms guide performance-driven variation and selection, and LLMs generate program variants \citep{zhang2024understanding}. One straightforward realization is single-trajectory refinement, where a candidate heuristic is repeatedly modified. The self-taught optimizer \citep{zelikman2024stop} integrates classical heuristic principles into a recursive refinement process of scaffold programs, producing transferable optimizers for COPs such as Max-Cut. However, the single-trajectory nature inherently restricts exploration of algorithmic diversity.

Recent research has shifted toward population-based evolutionary heuristics. A representative example is FunSearch \citep{romera2024mathematical}, which evolves heuristic code fragments through distributed island populations, prompt-based variation, and migration to maintain diversity. Subsequently, LLaMEA \citep{van2024llamea} enables LLMs to generate complete heuristic algorithms within a predefined interface and builds an automated evolutionary loop for execution, verification, filtering, and repair. Despite promising results, both rely exclusively on code-level evolution to identify effective heuristics, thereby incurring the high computational cost of LLM queries.

To overcome this shortcoming, EoH \citep{9e7eceac6e31432aafced5e02a49de16} generates population individuals directly from task-descriptive prompts, pairing a core design idea in natural language with a corresponding executable code implementation and evolving via prompt-based genetic operators. Subsequent extensions further broaden the applicability and efficiency of EoH. \cite{10945804} integrate EoH into a memetic framework for Lot-Streaming Hybrid Job Shop Scheduling with Variable Sublots, where EoH-designed heuristics function as decomposed subproblem-specific local search operators. \cite{wu2025efficient} propose Hercules, which introduces Core Abstraction Prompting to distill essential components from elite heuristics into prompt construction, thereby guiding a more focused search. The variant Hercules-P further incorporates Performance Prediction Prompting to estimate fitness from the semantic similarity of prior individuals, reducing redundant evaluations.

Recent studies suggest that LLM-enhanced evolutionary computation can benefit from richer feedback signals, motivating reflective evolutionary mechanisms. Reflective Evolution \citep{ye2024reevo} proposes short- and long-term reflections based on parental analysis and intergenerational comparison, incorporating both into prompts to steer future offspring. Building on this idea, self-evolution reflection \citep{huang2024automatic} is further incorporated to capture the characteristics of individual change induced by evolution. Since ReEvo struggles to maintain population diversity, HSEvo \citep{dat2025hsevo} adopts a more cost-effective flash reflection and harmony search to fine-tune elite parameters. Similar reflective principles also emerge in the dual-layer self-evolution structure of LLM-driven LNS \citep{ye2025large}, where the evolution of inner-layer destruction–repair heuristic operators informs revisions to outer-layer prompting strategies. 

Although these studies demonstrate the promise of LLM-enhanced EC, their main contributions remain largely centered on the generic automated heuristic-design paradigm. More broadly, these methods address the limited treatments of how LLM-designed components should interact with a specific optimization problem and algorithmic framework-level structure. Empirically, this limitation is most evident in EoH, whose GLS-TSP search takes nearly two days. Consequently, simply enriching the generic LLM-for-AHD loop with reflection, prompt refinement, or feedback augmentation may still leave the search direction under-specified when the target algorithm relies on tightly coupled components. More recently, CoEvo-AHD \citep{kuang2026llm} decomposes the problem decision into subproblem evolving populations. The coordination is only introduced through cross-population evaluation and recombination, leaving design-level interdependence among functional components within a single heuristic largely unexplored.
\section{Joint-Generation Approach for LLM-Enhanced GLS Design }
\label{Sec:Methods}
LLM-HCJG advances a distinct algorithmic dimension by redesigning search direction within the interpretable GLS backbone for both TSP and CVRP. To preserve controllability and verifiability, LLM-driven modifications are confined to two locally critical components that directly shape search bias. Solution initialization determines the starting search state, whereas penalty construction guides subsequent perturbations. Accordingly, the following subsections first introduce the LLM-HCJG framework, then present edge-penalized GLS from TSP to CVRP, and finally describe joint component generation with enhanced guided search.
\subsection{LLM-Driven Heuristic Components Joint Generation Framework}
\label{LLMGLS}
Our framework relies on two fundamental mechanisms:
\begin{itemize}[leftmargin=*]
\item \textbf{Evolutionary Representation in Algorithm Space.}  
Each individual is represented as a complete algorithm $a_j$ stored a design blueprint $Z_j$ and coupled component functions ${F}^{\text{m}}_j$, rather than as a candidate solution. 
\item \textbf{LLM-driven Construction and Refinement of Algorithmic Components.}  
Unlike costly search procedures and domain-specific model retraining, LLMs are integrated into the evolutionary cycle to automatically generate and refine algorithmic components, enabling more efficient exploration.
\end{itemize}

The LLM-HCJG framework operationalizes a population-based evolutionary process in which multiple pairs of interdependent heuristic component functions interact. The complete procedure is summarized in Algorithm~\ref{alg:LLM-HCJG}. At initialization, a population $P$ consists of $N$ candidate individuals created by LLMs, and each $a_j$ is evaluated in training instances to obtain fitness. The population then evolves over $N_g$ generations via selection, crossover, mutation, and population management, until the best-performing algorithm $a_j^*$ is identified. Crossover and mutation are activated stochastically with probabilities $\sigma_1$ and $\sigma_2$, respectively, to adjust the evolutionary intensity. The detailed framework follows in the subsequent subsections.
\begin{algorithm}[!ht]
\footnotesize
\caption{LLM-driven Heuristic Components Joint Generation Framework}\label{alg:LLM-HCJG}
\begin{algorithmic}[1]
  \Require Number of generations $N_g$, population size $N$, crossover probability $\sigma_1$, mutation probability $\sigma_2$, number of parents for crossover $l$, number of offspring per crossover $s$ and a given universal LLM
  \Ensure Best algorithm $a_j^*$

  \State $P \gets \textbf{Initialization}(\text{LLM},\, i1, N)$, $\mathcal{F} \gets \textbf{Evaluate}(P)$

  \For{$i = 1$ \textbf{to} $N_g$}
   \State $P_{\text{off}} \gets \emptyset$
      \State Sample $u_1 \sim \mathcal{U}(0,1)$ 
    \If{$u_1 < \sigma_1$} \hfill \texttt{// Crossover with probability $\sigma_1$}
      \For{$k = 1$ \textbf{to} $N$}
        \State $P_{\text{par}} \gets \textbf{Select}(P,\, l)$ 
        \State $P_{\text{cro}} \gets \textbf{Crossover}(\text{LLM},\, e1, e2,\, P_{\text{par}},\, s)$, $\mathcal{F}_{\text{cro}} \gets \textbf{Evaluate}(P_{\text{cro}})$
        \State $P_{\text{off}} \gets P_{\text{off}} \cup P_{\text{cro}}$
      \EndFor
    \EndIf

      \State Sample $u_2 \sim \mathcal{U}(0,1)$
    \If{$u_2 < \sigma_2$} \hfill \texttt{// Mutation with probability $\sigma_2$}
      \For{$k = 1$ \textbf{to} $N$}
        \State $P_{\text{par}} \gets \textbf{Select}(P,\, 1)$
        \State $P_{\text{mut}} \gets \textbf{Mutate}(\text{LLM},\, m1, m2, m3,\, P_{\text{par}},\, 1)$, $\mathcal{F}_{\text{mut}} \gets \textbf{Evaluate}(P_{\text{mut}})$
        \State $P_{\text{off}} \gets P_{\text{off}} \cup P_{\text{mut}}$
      \EndFor
    \EndIf
    \State $P \gets \textbf{Management}(P \cup P_{\text{off}}, N)$
  \EndFor
  \State $a_j^* \gets \text{ArgBest}(P)$
\end{algorithmic}
\end{algorithm}
\subsubsection{Individual Representation in Joint coevolution Mechanism}
Three layers (see Figure~\ref{fig:individual}) together define what the individual represents, how component functions are organized, and how the evolutionary process is driven within the LLM-HCJG framework.

\begin{figure}[htbp]
    \centering
    \includegraphics[width=0.6\textwidth]{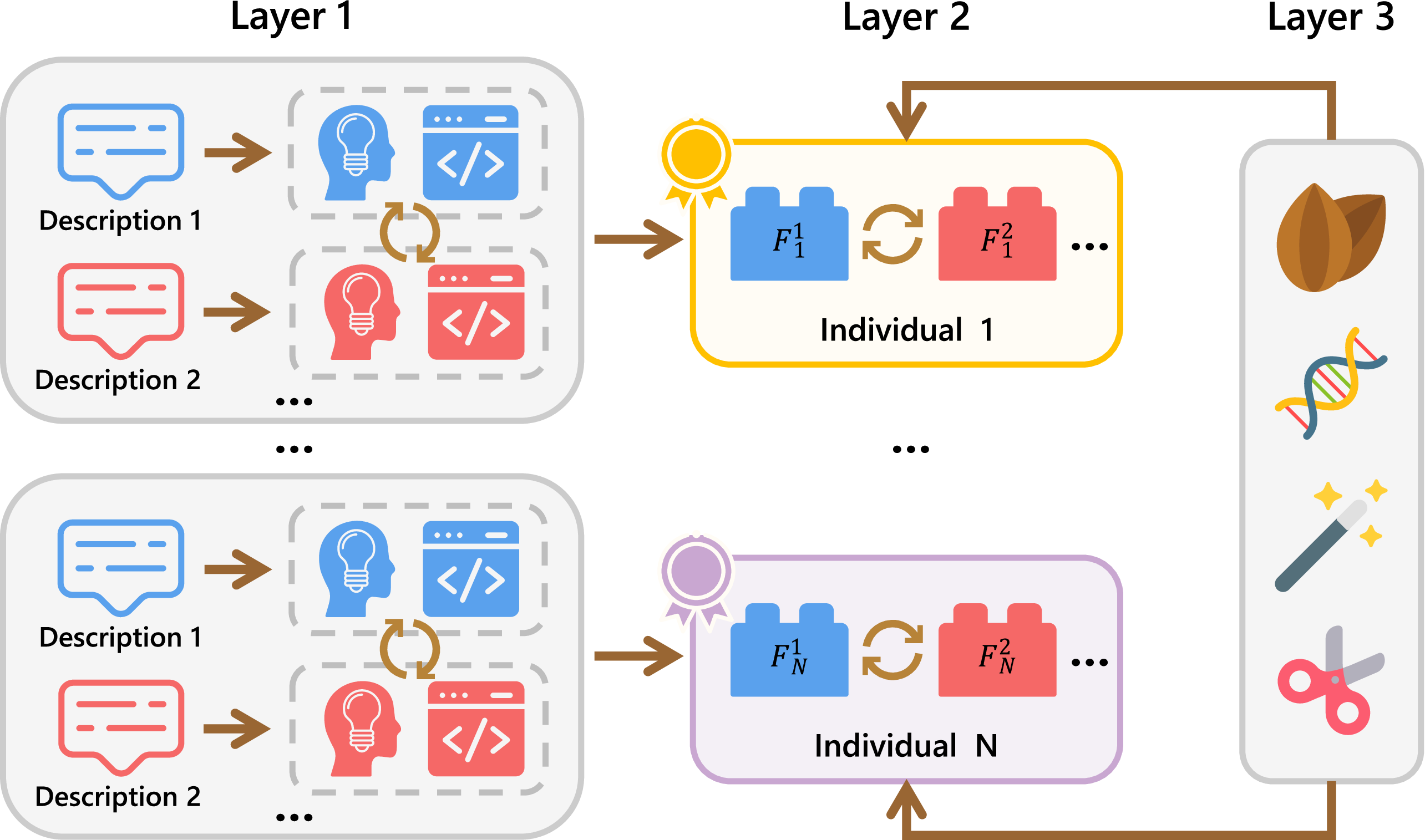}
    \caption{Illustration of the three-layer framework for heuristic modular composition.}
    \label{fig:individual}
\end{figure}

\textbf{Layer 1: Component-Specific Function Structure (see Appendices~\ref{apx:tsp-basic-attributes} and~\ref{apx:cvrp-basic-attributes}).} Each heuristic component function corresponds to a distinct task and serves as the smallest representational unit. For structural consistency, each component is instantiated from a standardized prompt template with six fields:
\begin{itemize}[leftmargin=*]
\item \textbf{Task Definition} (self.prompt\_task): objective and background of the task;
\item \textbf{Function Name} (self.prompt\_func\_name): required function designation;
\item \textbf{Inputs} (self.prompt\_func\_inputs): input parameter list;
\item \textbf{Outputs} (self.prompt\_func\_outputs): expected outputs or return variables;
\item \textbf{I/O Specification} (self.prompt\_inout\_inf): input/output types, dimensions, ranges of values, and so on;
\item \textbf{Additional Requirements} (self.prompt\_other\_inf): extra constraints for correctness and robustness.
\end{itemize}

\textbf{Layer 2: Individual Composition (as shown in Appendix~\ref{apx:tsp-elite} and ~\ref{apx:cvrp-elite}).} This layer assembles Layer~1 component functions and shared coordination into one candidate individual: 
\begin{equation}
a_j = \big(Z_j,{F}^{\text{1}}_j,{F}^{\text{2}}_j,\dots,{F}^{\text{M}}_j\big),
\end{equation}
where 
\begin{equation}
{F}^{\text{m}}_j=\big({Thought}^{\text{m}}_j,{Code}^{\text{m}}_j\big)
\end{equation}
Each $a_j$ contains four elements:
\begin{itemize}[leftmargin=*]
\item \textbf{Shared blueprint $Z_j$:}  a one-sentence semantic description of the joint principle governing the complementary roles and their intended interaction;
\item \textbf{Thought ${Thought}^{\text{m}}_j$:} the component-level rationale that instantiates $Z_j$ for ${F}^{\text{m}}_j$;
\item \textbf{Code implementation ${Code}^{\text{m}}_j$:} the rogram executable realization of ${Thought}^{\text{m}}_j$;
\item \textbf{Fitness evaluation:} the performance score $\mathcal{F}(a_j)$ obtained by executing on training instances.
\end{itemize}

\textbf{Layer 3: Evolutionary Prompt Construction (as shown in \ref{apx:i1}-\ref{apx:m3}).} Building on the first two layers, this layer defines the prompts that control initialization, crossover, and mutation in the evolutionary process. The evolutionary prompts include:
\begin{itemize}[leftmargin=*]
\item \textbf{Task description:} a problem statement paragraph that aggregates all task templates;
\item \textbf{Parent algorithm(s):} selected parent individuals for recombination;
\item \textbf{Evolution-specific hints:} guidance for generating diverse variants;
\item \textbf{Blueprint-guided coordination:}  instructions that identify structural dependencies and induce a shared design principle across component functions;
\item \textbf{Other hints:} constraints encouraging concise and necessary outputs.
\end{itemize}

\subsubsection{Shared Design Blueprint in Joint LLM Generation}
\label{sec:shared-blueprint}
To make the intended association between coupled components explicit, we introduce an individual-level shared design blueprint. Rather than referring to the common task prompt itself, the blueprint is a concise joint design principle generated for one algorithmic individual and recorded together with its component functions. Let $\mathcal{Z}$ denote a finite blueprint space, and let $x$ collect the task description, function interfaces, input--output constraints, and component-association requirements. The LLM first generates a shared blueprint
\begin{equation}
Z_j \sim \pi_{\theta}(\cdot \mid x),
\label{eq:blueprint-distribution}
\end{equation}
and then generates the initialization and penalty-construction components conditionally on the same blueprint:
\begin{equation}
F_j^{s} \sim q_{\theta}^{s}(\cdot \mid x,Z_j),
\qquad
F_j^{u} \sim q_{\theta}^{u}(\cdot \mid x,Z_j).
\label{eq:conditional-component-generation}
\end{equation}

Let $\zeta^{s}$ and $\zeta^{u}$ denote blueprint-label mappings obtained by structured validation of the initialization and penalty components, respectively. We impose the blueprint-consistency condition
\begin{equation}
\Pr\!\left(\zeta^{s}(F^{s})=Z\mid Z\right)=1,
\qquad
\Pr\!\left(\zeta^{u}(F^{u})=Z\mid Z\right)=1.
\label{eq:blueprint-faithfulness}
\end{equation}
Operationally, the shared blueprint is parsed by a structured parser and propagated across initialization, crossover, mutation, evaluation, and selection. Structural check ensures explicit preservation and regenerates outputs with missing fields,  while semantic faithfulness remains the condition stated in \eqref{eq:blueprint-faithfulness}.

\begin{definition}[Blueprint-consistent component pair]
\label{def:blueprint-consistency}
Define the set of blueprint-consistent component pairs as
\begin{equation}
\mathcal{C}=\left\{(f^{s},f^{u}):\zeta^{s}(f^{s})=\zeta^{u}(f^{u})\right\}.
\label{eq:consistent-pair-set}
\end{equation}
A pair is design-principle consistent if and only if it belongs to $\mathcal{C}$; otherwise, it is blueprint-mismatched.
\end{definition}
\subsubsection{Initialization} 
\label{Initilization}
The initial population of size $N$ is built using prompt i1 (Appendix~\ref{apx:i1}). It instructs the LLM to act as an expert on the target problem and simultaneously design coupled heuristic component functions under shared blueprints.
\subsubsection{Crossover} 
\label{Crossover}
At each iteration $i$, crossover is activated with probability $\sigma_1$. When triggered, $l$ complete parent records form $P_{\text{par}}=\{a_1,\ldots,a_l\}$. Prompts e1 and e2 (Appendices~\ref{apx:e1}--\ref{apx:e2}) receive parents blueprint together with its component records. The offspring first establishes a child blueprint by either departing from or refining the parental principles, and then generates both components conditional on that blueprint. The $s$ offspring are generated per round, yielding  $sN$ new individuals. 
\subsubsection{Mutation} 
\label{Mutation}
Similarly, mutation is applied with probability $\sigma_2$. In each of the $N$ rounds, the selected parent’s blueprint and component records are supplied to one of prompts m1--m3 (Appendices~\ref{apx:m1}--\ref{apx:m3}), which respectively optimize performance, retune parameters, and prune redundant structure. The offspring inherits the blueprint if the joint principle is preserved and revises it otherwise. Each mutation yields a complete, explicitly recorded  individual.
\subsubsection{Parents Selection} 
\label{sec:selection}
The selection operator chooses \(l\) parents for crossover or one parent for mutation from \(P\) using rank-based roulette, where individuals are sorted by descending fitness \(\mathcal{F}(a_j)\) with ranks \(r_j\in\{0,\ldots,N-1\}\). The unnormalized selection weight is
\begin{equation}
w_j=\frac{1}{r_j+1+N},
\end{equation}
which assigns larger sampling probabilities to higher-fitness individuals while preserving stochasticity.
\subsubsection{Population Management} 
\label{sec:Population management}
Given the stochastic activation of crossover and mutation, the number of offspring per generation ranges from 0 to $(s+1)N$. To maintain a constant population size $N$ and selection pressure,  the combined pool is truncated by discarding inferior candidates and retaining the top $N$ individuals.
\subsection{\texorpdfstring{Generalizing the Guided Local Search Backbone from TSP to CVRP}{Generalizing the Guided Local Search Backbone from TSP to CVRP}}
\label{GLS}
\label{GLS}
Guided local search (GLS) \citep{voudouris1999guided} augments local search with feature penalties, making repeatedly visited structures progressively less attractive. Algorithm~\ref{alg:gls} presents the canonical edge-based GLS formulation for TSP, which serves as the reference mechanism for transferring the guided-search principle to CVRP.

The TSP solution is initialized with a nearest-neighbor tour and optimized following the same local search with iterative 2-opt and relocate moves in Lines~2 and~19. The search features are undirected tour edges, and each edge \(e_{ij}\) has a historical penalty count \(p_{ij}\) stored in the symmetric matrix \(\mathcal{P}\). At a local optimum, the utility of an edge is
\begin{equation}
u_{ij}(e_{ij})
=
I_{e_{ij}}
\frac{d_{ij}}{1+p_{ij}},
\label{eq:utility}
\end{equation}
where
\begin{equation}
I_{e_{ij}}
=
\begin{cases}
1, & e_{ij}\in\text{the current tour},\\[2pt]
0, & \text{otherwise}.
\end{cases}
\label{eq:indicator}
\end{equation}
This utility favors long edges penalized, while frequently penalizing edges are suppressed to prevent repeated concentration on the same feature. The edge with the largest utility is penalized, and the accumulated penalties define the guided distance matrix
\begin{equation}
\mathcal{D}^{\prime}
=
\mathcal{D}+\lambda\mathcal{P}
=
[\,d_{ij}+\lambda p_{ij}\,],
\label{eq:augmented}
\end{equation}
with
\begin{equation}
\lambda
=
\alpha\frac{cost^*}{n},
\label{eq:lambda}
\end{equation}
where \(cost^*\) is the incumbent-best tour cost, \(n\) is the number of customers, and \(\alpha\in[0,1]\) controls the penalty scale. Local search is performed under the augmented matrix \(\mathcal{D}^{\prime}\), while solution quality is always evaluated under the original matrix \(\mathcal{D}\). By discouraging recurrent edge structures and biasing the search for alternative solutions, GLS facilitates escape from local optima.

Extending GLS from TSP to CVRP is valuable because it tests whether the guided-search principle remains effective beyond a single unconstrained tour and under coupled routing and resource constraints. Unlike TSP, CVRP requires simultaneous decisions on customer assignment, route sequencing, and load feasibility, while solution must satisfy customer demand, vehicle capacity, and fleet-size constraints. Since no single standard GLS workflow is directly applicable to CVRP, we extend the core principle of edge-based penalization followed by local improvement from a single-tour search state to a capacity-constrained set of vehicle routes (Algorithm~\ref{alg:cvrp_gls} in Appendix~\ref{app:routing-gls-frameworks}). Three feasibility-preserving neighborhoods are act on local search stage: intra-route2-opt (Lines~2 and~18) for route sequencing, inter-route relocate (Lines~2 and~18) for customer reassignment, and inter-route swap (Line~18) for balancing route allocation under capacity constraints.
\subsection{\texorpdfstring{Joint Component Coevolution and Enhanced Guided Search}{Joint Component Coevolution and Enhanced Guided Search}}
\label{LLM-HCJG}
LLM-HCJG modifies the guided-search procedure at two distinct levels: 
(i) \textbf{component-design level}, where an LLM jointly generates the solution-initialization and penalty-construction components under a shared design blueprint; and (ii) \textbf{search-framework level}, where the generated components are embedded into a fixed online mechanism for influential-edge selection, targeted perturbation and repair, additional local improvement, and periodic restoration to the incumbent-best solution. Figure~\ref{fig:framework} summarizes this separation between joint component evolution and online algorithm execution.
\begin{figure}[!t]
\centering
\includegraphics[width=0.9\textwidth]{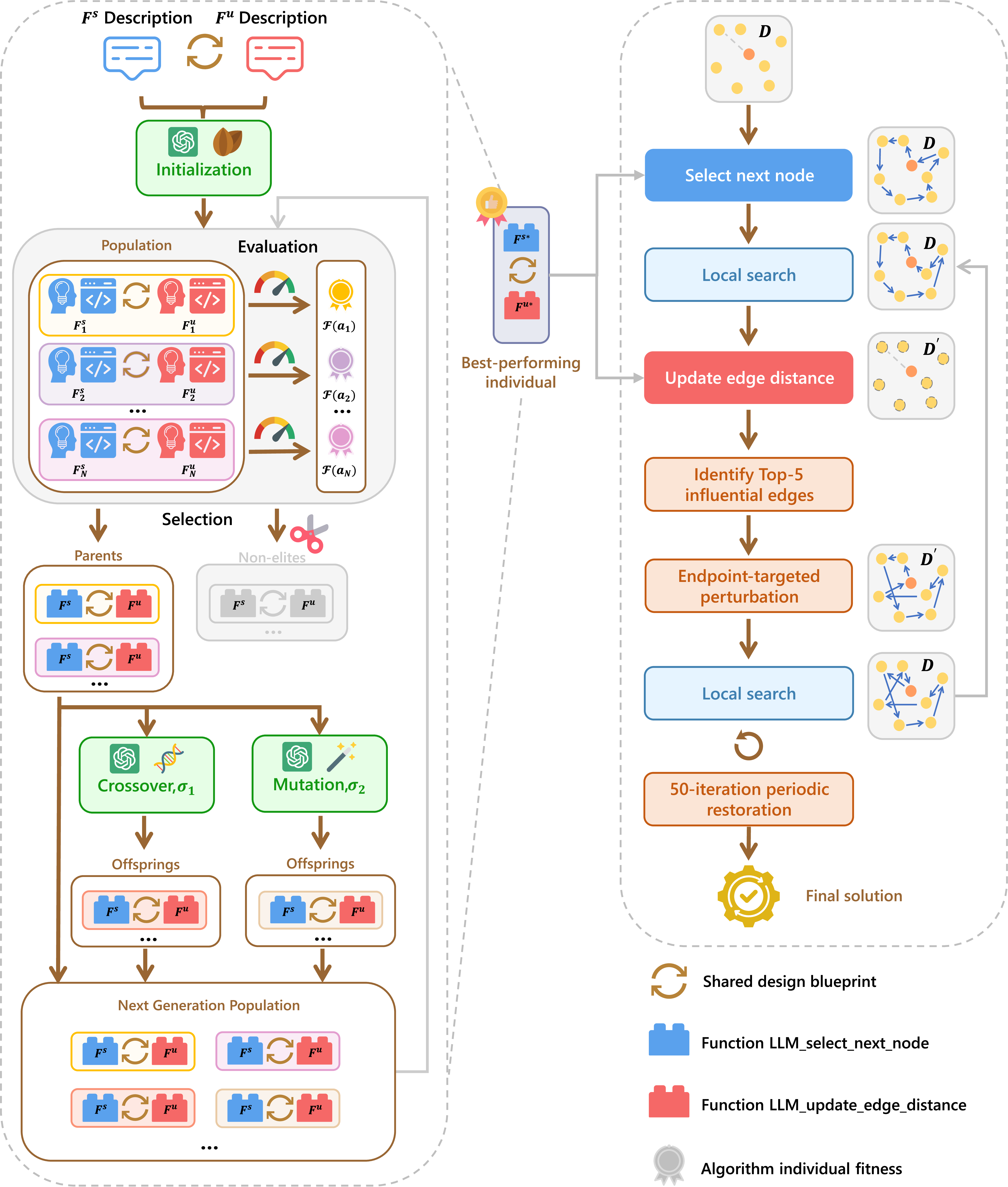}
\caption{LLM-HCJG framework: joint generation and enhanced guided search (TSP case).}
\label{fig:framework}
\end{figure}

\subsubsection{Joint Generation of Interdependent components}
The initialization component \(F_j^{s}\) implemented by \texttt{LLM\_select\_next\_node} constructs the search starting state, i.e., a TSP tour or a feasible CVRP route set. The penalty-construction component \(F_j^{u}\) maps this state and accumulated edge-usage information to a guided distance matrix through \texttt{LLM\_update\_edge\_distance}. For CVRP, both components also incorporate demand, capacity, and fleet size information to ensure route feasibility. The two functions are generated and evolved as an ordered pair because \(F_j^{s}\) determines the edge support on which \(F_j^{u}\) operates, while \(F_j^{u}\) determines how that state is subsequently perturbed. Thus, an individual consists of a shared design blueprint and a coupled function pair,
\begin{equation}
a_j = \big(Z_j,{F}^{\text{s}}_j,{F}^{\text{u}}_j\big).
\label{eq:blueprint-individual}
\end{equation}
Its fitness is the mean percentage relative deviation from the problem-specific reference objective over $K$ training instances:
\begin{equation}
\mathcal{F}(a_j)
=
\frac{1}{K}
\sum_{k=1}^{K}
\frac{
f_{a_j}^{(k)}-f_{\mathrm{ref}}^{(k)}
}{
f_{\mathrm{ref}}^{(k)}
}
\times 100\%.
\label{eq:fitness}
\end{equation}
Here, \(f_{a_j}^{(k)}\) is the objective value obtained by the complete generated pair and \(f_{\mathrm{ref}}^{(k)}\) is the corresponding known reference value.

\subsubsection{Theoretical Characterization of Coupled component Generation}
\label{sec:theoretical-characterization}
Let $f^{s}\in\mathcal{F}^{s}$ denote an initialization component and $f^{u}\in\mathcal{F}^{u}$ a penalty-construction component. Given a distance matrix $\mathcal{D}$, the initialization component constructs a route that is refined by a deterministic local-search operator $\mathcal{L}$:
\begin{equation}
S(f^{s})=\mathcal{L}\big(f^{s}(\mathcal{D})\big).
\label{eq:theory-initial-local-optimum}
\end{equation}
Let $A(S)\in\{0,1\}^{n\times n}$ be the undirected edge-incidence matrix of search state $S$, where
\begin{equation}
A_{ij}(S)=
\begin{cases}
1, & \text{if edge }(i,j)\text{ belongs to }S,\\
0, & \text{otherwise.}
\end{cases}
\label{eq:theory-edge-incidence}
\end{equation}
For the route-supported penalty components considered in this study, the guided distance update can be written in the unified form
\begin{equation}
\mathcal{U}_{f^{u}}(\mathcal{D},S,\mathcal{P})=\mathcal{D}+A(S)\odot B_{f^{u}}(\mathcal{D},\mathcal{P}),
\label{eq:route-supported-penalty}
\end{equation}
where $\mathcal{P}$ denotes auxiliary search-state information, including historical edge penalties or usage information, $\odot$ is the elementwise product, and $B_{f^{u}}(\mathcal{D},\mathcal{P})$ is the edge-adjustment intensity induced by $f^{u}$.

\begin{definition}[component interaction term]
\label{def:component-interaction}
For two initialization components $f^{s},\tilde f^{s}\in\mathcal{F}^{s}$ and two penalty-construction components $f^{u},\tilde f^{u}\in\mathcal{F}^{u}$, define the mixed finite difference
\begin{align}
\mathfrak{I}:={}&\mathcal{U}_{f^{u}}\big(\mathcal{D},S(f^{s}),\mathcal{P}\big)-\mathcal{U}_{f^{u}}\big(\mathcal{D},S(\tilde f^{s}),\mathcal{P}\big)\nonumber\\
&-\mathcal{U}_{\tilde f^{u}}\big(\mathcal{D},S(f^{s}),\mathcal{P}\big)+\mathcal{U}_{\tilde f^{u}}\big(\mathcal{D},S(\tilde f^{s}),\mathcal{P}\big).
\label{eq:component-interaction}
\end{align}
If $\mathfrak{I}\neq 0$, the two components have a non-separable interaction at the current search state.
\end{definition}

\begin{theorem}[State-transition non-separability of coupled heuristic components]
\label{thm:nonseparability}
Suppose that the penalty-construction operator satisfies \eqref{eq:route-supported-penalty}. If there exists an edge $(i,j)$ such that
\begin{equation}
A_{ij}\big(S(f^{s})\big)\neq A_{ij}\big(S(\tilde f^{s})\big)
\label{eq:path-difference-condition}
\end{equation}
and
\begin{equation}
\left[B_{f^{u}}(\mathcal{D},\mathcal{P})\right]_{ij}\neq\left[B_{\tilde f^{u}}(\mathcal{D},\mathcal{P})\right]_{ij},
\label{eq:penalty-difference-condition}
\end{equation}
then $\mathfrak{I}\neq 0$. Moreover, there do not exist mappings $\mathcal{G}^{s}$ and $\mathcal{G}^{u}$ that depend only on the individual initialization and penalty components, respectively, such that, for every $f^{s}\in\mathcal{F}^{s}$ and $f^{u}\in\mathcal{F}^{u}$,
\begin{equation}
\mathcal{U}_{f^{u}}\big(\mathcal{D},S(f^{s}),\mathcal{P}\big)=\mathcal{G}^{s}(f^{s};\mathcal{D},\mathcal{P})+\mathcal{G}^{u}(f^{u};\mathcal{D},\mathcal{P}).
\label{eq:additive-decomposition}
\end{equation}
\end{theorem}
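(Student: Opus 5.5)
The plan is to substitute the unified form \eqref{eq:route-supported-penalty} directly into the mixed finite difference \eqref{eq:component-interaction}. Abbreviate $A:=A(S(f^{s}))$, $\tilde A:=A(S(\tilde f^{s}))$, $B:=B_{f^{u}}(\mathcal{D},\mathcal{P})$ and $\tilde B:=B_{\tilde f^{u}}(\mathcal{D},\mathcal{P})$. The four copies of $\mathcal{D}$ appear with signs $+,-,-,+$, so they cancel. The elementwise product is bilinear, which leaves
\begin{equation*}
\mathfrak{I}=A\odot B-\tilde A\odot B-A\odot\tilde B+\tilde A\odot\tilde B=(A-\tilde A)\odot(B-\tilde B).
\end{equation*}
The $(i,j)$ entry is therefore $(A_{ij}-\tilde A_{ij})(B_{ij}-\tilde B_{ij})$. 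By \eqref{eq:path-difference-condition}, the first factor equals $\pm1$, since both incidence entries lie in $\{0,1\}$ and differ. By \eqref{eq:penalty-difference-condition}, the second factor is nonzero. Hence $\mathfrak{I}_{ij}\neq0$, and so $\mathfrak{I}\neq 0$. This first part is routine.

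For the second claim, I argue by contradiction. Suppose mappings $\mathcal{G}^{s}$ and $\mathcal{G}^{u}$ satisfy \eqref{eq:additive-decomposition} for every pair. Evaluate the additive form at the four pairs $(f^{s},f^{u})$, $(\tilde f^{s},f^{u})$, $(f^{s},\tilde f^{u})$ and $(\tilde f^{s},\tilde f^{u})$, and combine them with the same signs as in \eqref{eq:component-interaction}. Each of $\mathcal{G}^{s}(f^{s})$, $\mathcal{G}^{s}(\tilde f^{s})$, $\mathcal{G}^{u}(f^{u})$ and $\mathcal{G}^{u}(\tilde f^{u})$ then appears exactly once with a plus sign and once with a minus sign. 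So any additive decomposition forces $\mathfrak{I}=0$. This contradicts the first part, because the four components $f^{s},\tilde f^{s},f^{u},\tilde f^{u}$ exhibited under the hypotheses all lie in $\mathcal{F}^{s}$ and $\mathcal{F}^{u}$, and \eqref{eq:additive-decomposition} must hold for all of them.

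The one point needing care is the quantifier structure. The hypotheses are existential: they assert that some edge and some two pairs of components witness the differences. The conclusion is a universal non-existence statement over $(\mathcal{D},\mathcal{P})$ held fixed. I would state explicitly that $\mathcal{D}$ and $\mathcal{P}$ are fixed throughout, so that the $\mathcal{G}$'s are functions of a single component only. It also matters that $S(\cdot)$ is well defined because $\mathcal{L}$ is deterministic, as stated before \eqref{eq:theory-initial-local-optimum}; otherwise $A(S(f^{s}))$ would not be a function of $f^{s}$. With these remarks in place, the mixed-difference cancellation is the whole proof, so I do not expect a genuine obstacle beyond presenting this bookkeeping cleanly.
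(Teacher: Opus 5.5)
Your proposal is correct and follows the paper's proof essentially step for step. You substitute \eqref{eq:route-supported-penalty} to get the factorization $\mathfrak{I}=(A-\tilde A)\odot(B-\tilde B)$, read off the nonzero $(i,j)$ entry, and show that any additive decomposition \eqref{eq:additive-decomposition} forces the mixed difference to vanish, which is a contradiction. Your extra remarks on the $\pm1$ first factor, the fixed $(\mathcal{D},\mathcal{P})$, and the determinism of $\mathcal{L}$ are harmless clarifications that the paper leaves implicit.
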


Theorem~\ref{thm:nonseparability} shows that the state-transition effects between initialization and penalty components cannot be decomposed into two fixed, independently attributable contributions. The complete component pair should therefore be generated, evaluated, and selected jointly. A proof is provided in Appendix~\ref{app:proof-nonseparability}.

\begin{theorem}[Conditional mismatch elimination under shared-blueprint joint generation]
\label{thm:blueprint}
Suppose that the joint LLM generation process satisfies \eqref{eq:blueprint-distribution}--\eqref{eq:blueprint-faithfulness}. Let $(F^{s,J},F^{u,J})$ be a component pair generated under the same shared blueprint $Z$. Then
\begin{equation}
\Pr\!\left((F^{s,J},F^{u,J})\in\mathcal{C}\right)=1.
\label{eq:joint-consistency-probability}
\end{equation}
Alternatively, suppose that the initialization and penalty-construction components are generated from independent blueprints,
\begin{equation}
Z_s,Z_u\overset{\mathrm{i.i.d.}}{\sim}\pi_{\theta}(\cdot\mid x),
\label{eq:independent-blueprints}
\end{equation}
with
\begin{equation}
F^{s,I}\sim q_{\theta}^{s}(\cdot\mid x,Z_s),
\qquad
F^{u,I}\sim q_{\theta}^{u}(\cdot\mid x,Z_u).
\label{eq:independent-generation}
\end{equation}
Then
\begin{equation}
\Pr\!\left((F^{s,I},F^{u,I})\in\mathcal{C}\right)=\sum_{z\in\mathcal{Z}}\pi_{\theta}(z\mid x)^2,
\label{eq:independent-consistency-probability}
\end{equation}
and the blueprint-mismatch probability is
\begin{equation}
\Pr\!\left((F^{s,I},F^{u,I})\notin\mathcal{C}\right)=1-\sum_{z\in\mathcal{Z}}\pi_{\theta}(z\mid x)^2.
\label{eq:mismatch-probability}
\end{equation}
If at least two blueprints have positive probability under $\pi_{\theta}(\cdot\mid x)$, then the mismatch probability in \eqref{eq:mismatch-probability} is strictly positive.
\end{theorem}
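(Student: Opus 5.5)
The plan is to treat the two regimes separately. Both reduce to the blueprint-consistency condition \eqref{eq:blueprint-faithfulness} plus an elementary calculation over the finite blueprint space $\mathcal{Z}$.

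\textbf{Joint case.} Condition on the shared blueprint $Z=z$. By \eqref{eq:blueprint-faithfulness}, the events $\{\zeta^{s}(F^{s,J})=z\}$ and $\{\zeta^{u}(F^{u,J})=z\}$ each have conditional probability one. Their intersection therefore also has conditional probability one. On that intersection, $\zeta^{s}(F^{s,J})=\zeta^{u}(F^{u,J})$, so by Definition~\ref{def:blueprint-consistency} the pair lies in $\mathcal{C}$. Averaging over $Z\sim\pi_\theta(\cdot\mid x)$ with the law of total probability gives \eqref{eq:joint-consistency-probability}.

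\textbf{Independent case.} Condition on $(Z_s,Z_u)=(z,z')$. Because each component is generated only from its own blueprint, \eqref{eq:blueprint-faithfulness} still applies component-wise: almost surely $\zeta^{s}(F^{s,I})=z$ and $\zeta^{u}(F^{u,I})=z'$. Hence, up to a null set,
\[
\{(F^{s,I},F^{u,I})\in\mathcal{C}\}=\{Z_s=Z_u\}.
\]
By the i.i.d. assumption \eqref{eq:independent-blueprints},
\[
\Pr(Z_s=Z_u)=\sum_{z\in\mathcal{Z}}\Pr(Z_s=z)\Pr(Z_u=z)=\sum_{z}\pi_\theta(z\mid x)^2 .
\]
This is \eqref{eq:independent-consistency-probability}, and taking the complement gives \eqref{eq:mismatch-probability}.

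\textbf{Strict positivity.} Write $p_z=\pi_\theta(z\mid x)$, so $\sum_z p_z=1$ and each $p_z\in[0,1]$. Then $\sum_z p_z^2\le \max_z p_z\sum_z p_z=\max_z p_z$. If at least two blueprints have positive probability, every $p_z<1$, so $\max_z p_z<1$ and the mismatch probability is strictly positive. Alternatively, $1-\sum_z p_z^2=\sum_{z\neq z'}p_zp_{z'}$, which contains a positive term.

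\textbf{Main obstacle.} The only delicate point is measure-theoretic. In the independent case, \eqref{eq:blueprint-faithfulness} is stated for a single conditioning blueprint $Z$, so I will read it as holding for each component with respect to the blueprint it is actually conditioned on. I will also state explicitly that $F^{s,I}$ and $F^{u,I}$ are generated only from $Z_s$ and $Z_u$ respectively, exactly as in \eqref{eq:independent-generation}. I must then check that intersecting the two almost-sure events, and summing over the countable (indeed finite) set $\mathcal{Z}$, preserves the almost-sure statements. This holds because a finite union of null sets is null.
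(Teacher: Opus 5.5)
Your proposal is correct and follows the paper's proof essentially step for step. In the joint case you condition on $Z=z$ and apply the law of total probability; in the independent case you reduce membership in $\mathcal{C}$ to the event $\{Z_s=Z_u\}$, sum $\pi_\theta(z\mid x)^2$, and take complements. Your explicit bound $\sum_z p_z^2\le\max_z p_z<1$ (or the identity $1-\sum_z p_z^2=\sum_{z\neq z'}p_zp_{z'}$) and your ``up to a null set'' qualifier make precise two steps the paper states without justification.
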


Theorem~\ref{thm:blueprint} characterizes the structural advantage of joint LLM generation in that an explicit shared blueprint guarantees design-principle consistency within a generated pair, whereas independently generated components can be mismatched with positive probability. The proof is provided in Appendix~\ref{app:proof-blueprint}.

\subsubsection{Framework-Level Search Enhancement}
Relative to the classical TSP GLS in Algorithm~\ref{alg:gls}, the online procedures in Algorithms~\ref{alg:LLMgls} and~\ref{alg:LLMgls_cvrp}, all reported in Appendix~\ref{app:routing-gls-frameworks}, introduce four fixed framework-level mechanisms: (i) \textbf{influential-edge selection}, which identifies the top-5 positive entries of \(\Delta=\max(\mathcal{D}'-\mathcal{D},0)\) as guided perturbation targets (Lines~7--9); (ii) \textbf{endpoint-targeted perturbation}, which applies local moves around the selected edge endpoints, using 2-opt and relocate for TSP plus inter-route swap for CVRP (Lines~10--20, Lines~10--24). Here, \(\delta\) denotes the objective change of a candidate local move, with \(\delta<0\) indicating accepted improvement; (iii) \textbf{post-perturbation local improvement}, which refines the perturbed solution after guidance from \(\mathcal{D}'\) while evaluating quality under \(\mathcal{D}\) (Lines~22--26, Lines~26--30); (iv) \textbf{periodic incumbent restoration}, which resets the current state to the best-so-far solution every 50 outer iterations to control search drift (Lines~27--29, Lines~31--33). These mechanisms are fixed components of the online search framework rather than generated by the LLM.

\section{Experiments}
\label{Sec:Experiments}
In this section, we evaluate the proposed LLM-HCJG framework and the resulting LLM-enhanced GLS algorithm. The entire process is implemented in Python and executed on an Intel Core i9-14900HX (2.20,GHz). To ensure closer alignment with the compared LLM-based methods, we follow ReEvo’s final-elite selection protocol: three independent full-evolution runs (with different random seeds) produce three run-specific elites, which are evaluated on the released public TSP/CVRP validation sets. Only the best-validated elite is subsequently tested 10 times on each test instance, with the mean performance reported. All tests are terminated when either the prescribed time budget is exhausted or the maximum number of 1,000 outer iterations is reached, with limits of 100 seconds for TSP and 20 seconds for CVRP. The experimental setup is presented first, followed by an analysis of the training process and the best-performing algorithm discovered by LLM-HCJG. The resulting algorithm is subsequently compared against representative baselines on synthetic instances and public benchmark suites. Finally, ablation studies on joint combinatorial dependency coevolution are performed to assess the role of cross-component co-adaptation in the proposed framework. All results and source codes will be publicly available at github website upon acceptance of this paper.
\subsection{Experiments Settings and Evolutionary Outcome}
\label{SubSec:Dataset}
This subsection reports the results of the LLM-HCJG training phase and its complete hyperparameter configuration in Table~\ref{tab:algo-settings}. We adopt the common LLM training budgets used in previous studies \citep{wu2025efficient, ye2024reevo}: 10 generations of coevolution $N_g$ with a population size $N$ of 10. Parameters $\sigma_1$, $\sigma_2$, $l$, and $s$ follow the EoH settings \citep{9e7eceac6e31432aafced5e02a49de16, liu2024example} to balance activation intensity, generation granularity, and search diversity. Candidate fitness is estimated on $K=5$ instances sampled from a pool of 64, providing a trade-off between evaluation reliability and computational overhead. Each instance contains 100 customer nodes uniformly sampled in the unit square $[0,1] \times [0,1]$, together with LKH reference solutions. CVRP instances additionally include integer customer demands in $[1,9]$, a fixed vehicle capacity of 50, with an average of 10--11 vehicles.

Under GPT-4o-mini, the LLM-HCJG training phase averagely requires 34 minutes for TSP and 183 minutes for CVRP. For each task, the training records contain 410 candidate-generation attempts.TSP and CVRP produce 361 and 390 valid candidates, corresponding to effective generation rates of 88.0\% and 95.12\%.

Figure~\ref{fig:gap} illustrates the behavior of the best-validated elite heuristic gap over generations. The best individuals of TSP and CVRP emerge at generations 4 and 7, respectively, indicating effective convergence within the adopted training budget. The average LLM training time for the TSP-GLS task is much lower than EoH \citep{wu2025efficient, ye2025large} required nearly two days of search and approximately 2,000 LLM queries across 64 TSP100 training instances \citep{9e7eceac6e31432aafced5e02a49de16, liu2024example}. This highlights that LLM-HCJG can achieve competitive heuristic discovery with a relatively modest training budget.
\begin{figure}[t]
\centering
\begin{subfigure}{0.496\textwidth}
    \centering
    \includegraphics[width=\textwidth]{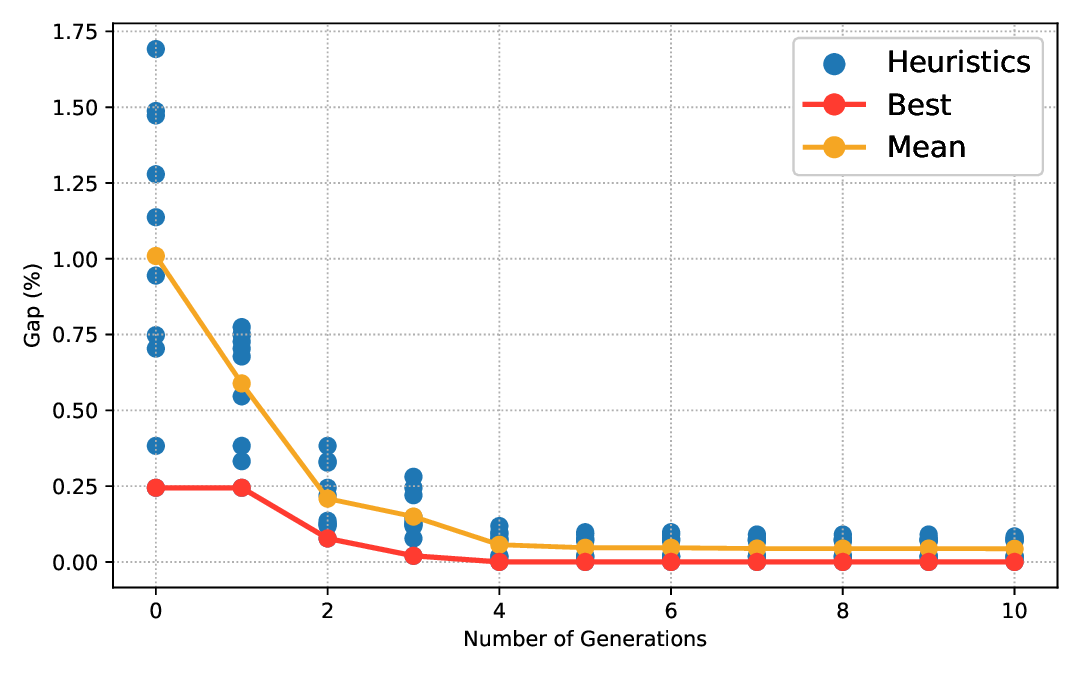}
    \caption{TSP}
    \label{fig:gap_tsp}
\end{subfigure}
\hfill
\begin{subfigure}{0.496\textwidth}
    \centering
    \includegraphics[width=\textwidth]{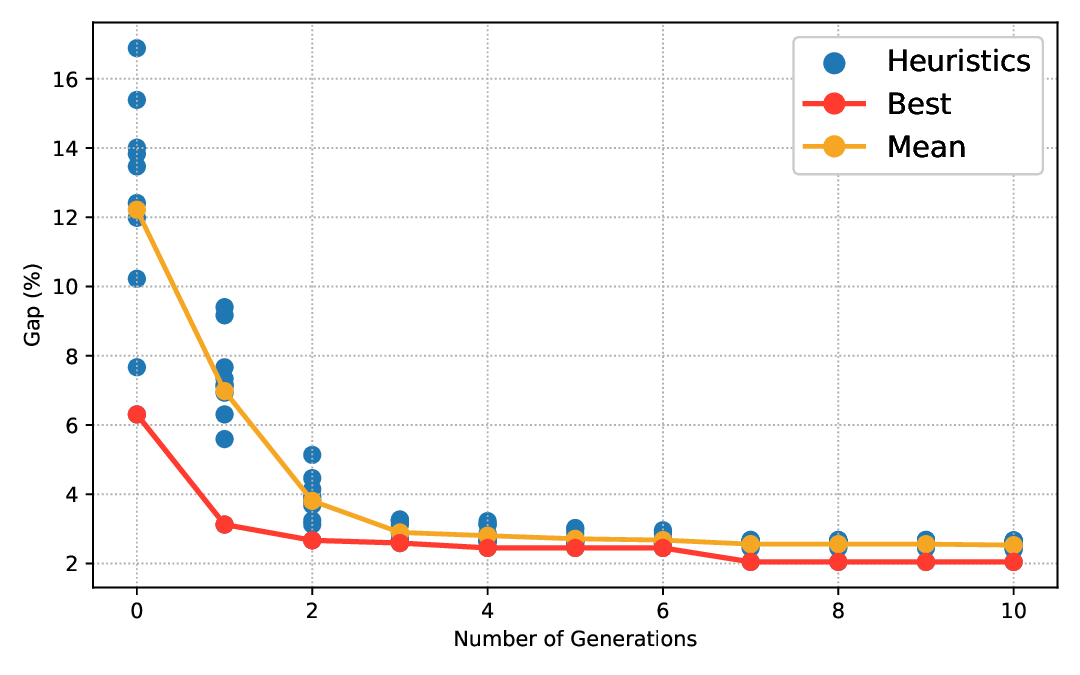}
    \caption{CVRP}
    \label{fig:gap_cvrp}
\end{subfigure}
\caption{Convergence of the Heuristic Gap under the LLM-HCJG Framework}
\label{fig:gap}
\end{figure}

Figure~\ref{fig:evolution_tree} presents the evolutionary trajectory plots of the final populations. Each node represents a candidate heuristic labeled by its index and fitness value. Orange, blue, and red denote the initial, final-retained, and best-heuristic individuals, respectively. The different-colored directed edges trace the evolution of the parent to the offspring and distinguish the corresponding evolutionary operators. In TSP and CVRP, the best individuals are No.~151 and No.~251, generated via the e1 and m1 operators. Their final performance gaps settle at 2.046\% and 0.000\%, with both preserved in subsequent generations.

Details on the best-performing TSP and CVRP elite heuristics code with mathematical interpretation are provided in Appendices~\ref{apx:tsp-elite} and~\ref{apx:cvrp-elite}, respectively. For TSP, the two functions follow a stochastic heuristic design that combines deterministic structural information with random factors $\xi_j$ and $\eta_{ij}$. For CVRP, both functions follow a structure-aware penalty-augmented distance shaping principle rather than relying on raw distances.

\subsubsection{Structural Alignment of the Best-Evolved component Pairs}
\label{sec:structural-indicators}
To assess structural interaction in the best-performing component pairs, two complementary post-hoc indicators are adopted:
\[
Q_h(a_j;x)
\in
\left\{
\mathrm{Focus}_{h}(a_j;x),
\mathrm{Diversity}_{h}(a_j;x)
\right\}.
\]
At the \(h\)-th penalty-update call on instance \(x\), \(\mathrm{Focus}_{h}(a_j;x)\) measures how strongly the gap signals align with the current search state initialized by \(F_j^{s}\) and refined by the subsequent search process, whereas \(\mathrm{Diversity}_{h}(a_j;x)\) measures whether the signals are differentiated across edges rather than uniformly distributed. The final value is averaged over \(H\) penalty-update calls for each of the \(M\) evaluated instances:
\begin{equation}
Q(a_j)
=
\frac{1}{MH}
\sum_{m=1}^{M}
\sum_{h=1}^{H}
Q_h(a_j;x_m).
\label{eq:structural-aggregation}
\end{equation}
Appendix~\ref{apx:structural-indicators} shows complete definitions of these indicators. For the best TSP and CVRP individuals, the measured values are \((\mathrm{Focus},\mathrm{Diversity})=(0.980,0.318)\) and \((0.978,0.438)\). These results indicate that the jointly evolved component pairs realize a state-aligned and selective perturbation pattern.
\begin{figure}[t]
\centering
\begin{subfigure}{0.496\textwidth}
    \centering
    \includegraphics[width=\textwidth]{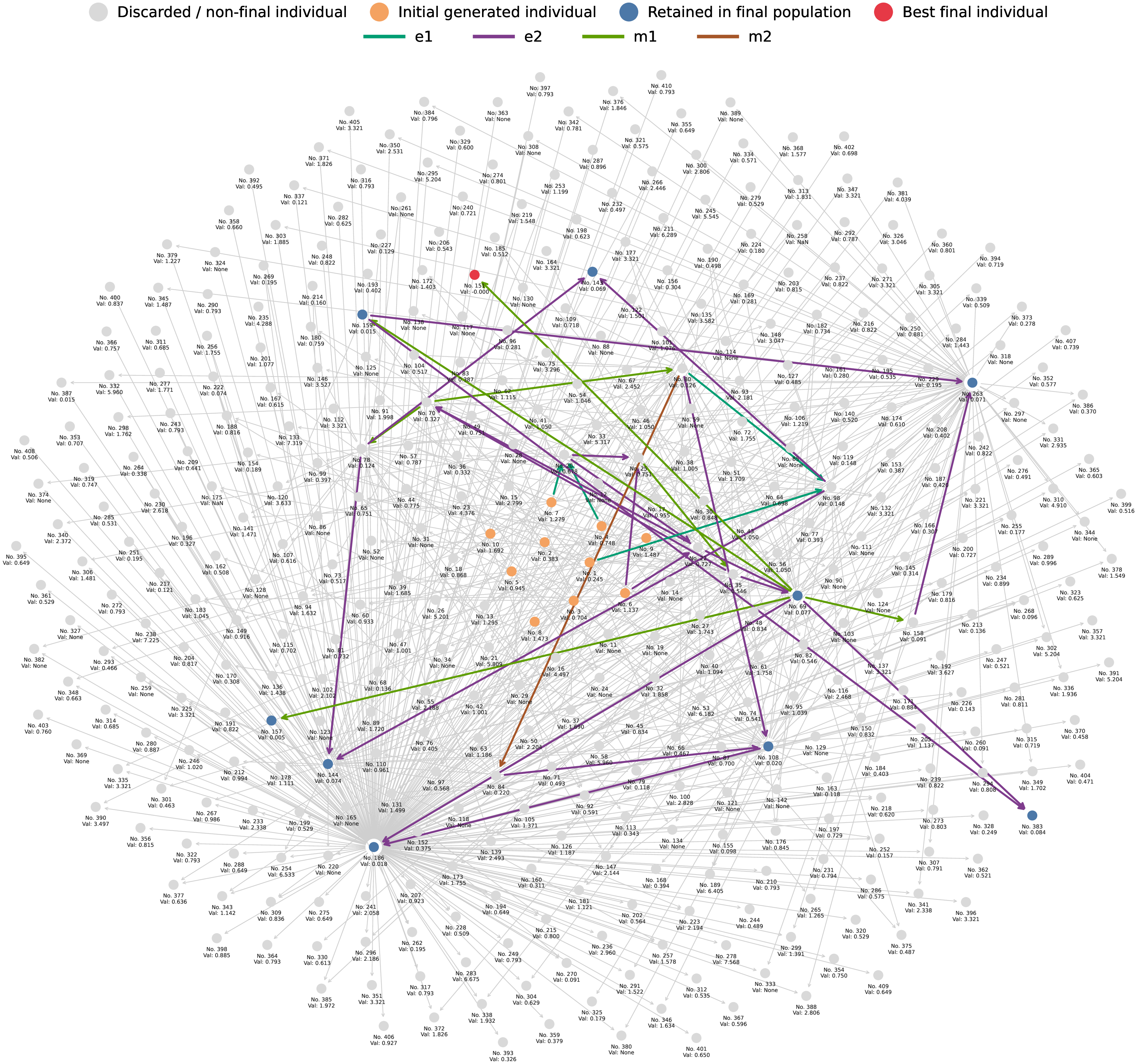}
    \caption{TSP}
    \label{fig:tree_tsp}
\end{subfigure}
\hfill
\begin{subfigure}{0.496\textwidth}
    \centering
    \includegraphics[width=\textwidth]{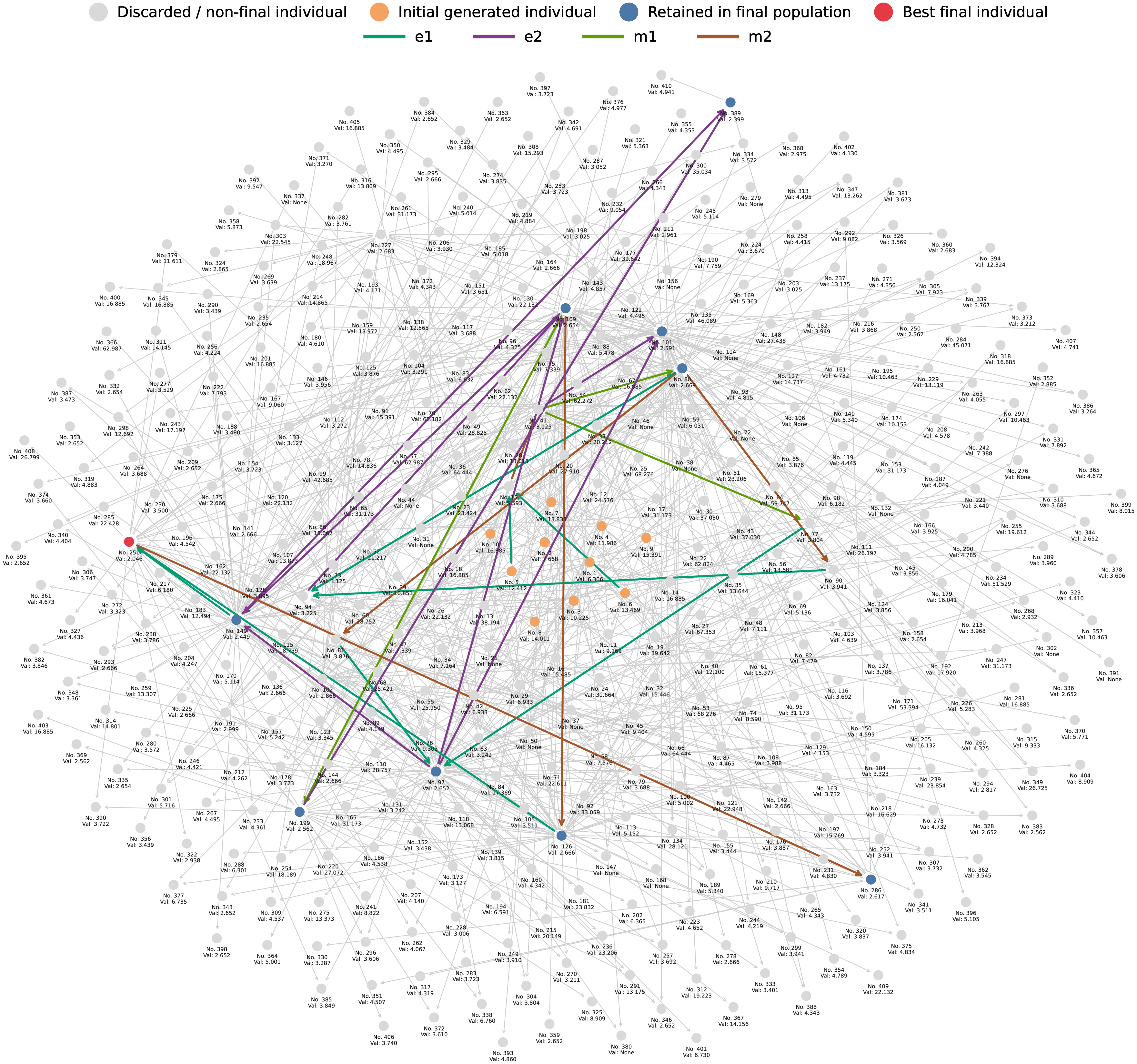}
    \caption{CVRP}
    \label{fig:tree_cvrp}
\end{subfigure}
\caption{Final-population evolutionary trajectories for TSP and CVRP}
\label{fig:evolution_tree}
\end{figure}
\begin{table}[!hbtp]
  \centering
  \caption{Parameter Settings of LLM-HCJG}
  \label{tab:algo-settings}
  \begin{threeparttable}
    \begin{tabular}{lrr}
      \toprule
      Parameter & Meaning & Value \\
      \midrule
      $\sigma_{1}$ & Probability for crossover  & 1.0 \\
      $\sigma_{2}$ & Probability for mutation  & 0.5 \\
      LLM & Chosen large language model & GPT-4o-mini \\
      $N$ & Population size & 10 \\
      $N_g$ & Number of generations & 10 \\
      $l$ & Number of parent individuals & 2 \\
      $s$ & Number of offspring individuals  & 1 \\
      $K$ & Number of training instances & 5 \\
      \bottomrule
    \end{tabular}
  \end{threeparttable}
\end{table}
\subsection{Baseline Methods}
\label{SubSec:Comparison with Baseline Methods}
The proposed method is evaluated against representative baselines from three categories:
\begin{itemize}[leftmargin=*]
  \item \textbf{Handcrafted heuristic approaches:} including local search (LS) \citep{croes1958method}, guided local search (GLS) \citep{voudouris1999guided}, knowledge-GLS (KGLS) \citep{arnold2019knowledge} and elite biased GLS (EBGLS) \citep{shi2018eb};

  \item \textbf{Neural network-based approaches:} including attention model (AM) \citep{kool2018attention}, policy optimization with multiple optima (POMO) \citep{kwon2020pomo}, large-scale enhanced heavy decoder (LEHD) \citep{luo2023neural}, graph neural network GLS (GNNGLS) \citep{hudson2021graph} and Neural GLS (NeuralGLS) \citep{sui2024neuralgls};

  \item \textbf{Prior LLM-EC methods:} EoH \citep{9e7eceac6e31432aafced5e02a49de16} and ReEvo \citep{ye2024reevo}.
\end{itemize}

For TSP, comparisons are conducted across all method categories considered. For CVRP, the lack of directly comparable neural results and canonical CVRP variants of KGLS/EBGLS restricts the comparison only to GLS, EoH, and ReEvo. The results for neural network-based approaches are taken from the original publications, given that the substantial method-specific training protocols and computational resources are beyond the scope of this study. EoH and ReEvo are only generated for the \texttt{LLM\_update\_edge\_distance} function, while \texttt{LLM\_select\_next\_node} is fixed to the default nearest-feasible-customer rule. For TSP, the best release from \citep{9e7eceac6e31432aafced5e02a49de16} is adopted directly, as it conforms to the same dynamic edge-distance update interface and avoids unnecessary retraining variability. In contrast, ReEvo’s published heuristic specific to classical GLS is incompatible. The best individuals of ReEvo for TSP and EoH/ReEvo for CVRP, listed in Appendix~\ref{apx:eoh-reevo-best-penalty}, are re-evolved under the same training setting as LLM-HCJG.
\subsection{Performance Comparison on Synthetic Instances}
\label{Experiment 1}
Synthetic benchmarks are generated as random Euclidean instances in $[0,1]^2$, comprising 1{,}000 TSP instances for each size in $\{20,50,100\}$ and 64 CVRP instances for each customer size in $\{20,50,100\}$. Extraly for CVRP, the depot is fixed at $(0.5, 0.5)$, customer demands are uniformly sampled from integers in $[1, 9]$, and the vehicle capacity is set to 50. The optimality gaps compare the objective value $f^{(i)}$ obtained on instance $i$ in each independent run against the corresponding LKH3-generated reference value $f_{\mathrm{ref}}^{(i)}$ \citep{helsgaun2017extension}. The mean percentage gap is defined as
\begin{equation}
\operatorname{Gap}
=
\frac{1}{N_{\mathrm{test}}}
\sum_{i=1}^{N_{\mathrm{test}}}
\frac{
f^{(i)}-f_{\mathrm{ref}}^{(i)}
}{
f_{\mathrm{ref}}^{(i)}
}
\times 100\%,
\label{eq:test_gap}
\end{equation}
where $N_{\mathrm{test}}=1000$ for TSP and $N_{\mathrm{test}}=64$ for CVRP at each problem size. 

To distinguish framework-level gains from LLM-designed heuristic gains, Framework-Augmented GLS (FA-GLS) augments the classical GLS core with the search mechanisms of LLM-HCJG, while preserving greedy initialization, edge selection by \(d_{ij}/(1+p_{ij})\), and augmented distance construction \(D'=D+\lambda P\). The results are  reported in Tables~\ref{tab:tsp_synthetic} and~\ref{tab:cvrp_synthetic}.
\begin{table}[H]
  \centering
  \caption{Performance comparison on synthetic TSP instances}
  \label{tab:tsp_synthetic}
  \footnotesize
  \begin{threeparttable}
    \setlength{\tabcolsep}{3pt}
    \begin{tabular}{lcccccc}
      \toprule
      \multirow{2}{*}{Method} &
      \multicolumn{2}{c}{\textbf{TSP20}} &
      \multicolumn{2}{c}{\textbf{TSP50}} &
      \multicolumn{2}{c}{\textbf{TSP100}} \\
      \cmidrule(lr){2-3}\cmidrule(lr){4-5}\cmidrule(lr){6-7}
      & Gap (\%) & Time (s) & Gap (\%) & Time (s) & Gap (\%) & Time (s) \\
      \midrule
      AM           & 0.069 & 0.038 & 0.494 & 0.124 & 2.368 & 0.356 \\
      POMO         & 0.120 & /     & 0.640 & /     & 1.070 & /     \\
      LEHD         & 0.950 & /     & 0.485 & /     & 0.577 & /     \\
      GNNGLS       & \textbf{0.000} & 10.010 & 0.009 & 10.037 & 0.698 & 10.108 \\
      NeuralGLS    & \textbf{0.000} & 10.005 & 0.003 & 10.011 & 0.470 & 10.024 \\
      \midrule
      LS           & 0.918 & 1.003 & 2.869 & 1.006 & 3.708 & 1.029 \\
      GLS          & \textbf{0.000} & 1.050 & 0.194 & 1.140 & 1.212 & 1.414 \\
      FA-GLS & \textbf{0.000} & 1.138 & \textbf{0.000} & 1.289 & 0.140 & 1.975 \\
      KGLS-r       & \textbf{0.000} & 1.254 & \textbf{0.000} & 1.619 & 0.168 & 2.382 \\
      KGLS-c       & \textbf{0.000} & 1.653 & \textbf{0.000} & 1.409 & 0.178 & 2.001 \\
      EBGLS        & \textbf{0.000} & 1.053 & 0.007 & 1.164 & 0.246 & 1.484 \\
      \midrule
      EoH          & \textbf{0.000} & 4.654 & \textbf{0.000} & 7.865 & 0.147 & 21.228 \\
      ReEvo        & \textbf{0.000} & 2.347 & \textbf{0.000} & 3.157 & 0.293 & 7.683 \\
      LLM-HCJG     & \textbf{0.000} & 1.757 & \textbf{0.000} & 2.960 & \textbf{0.070} & 5.527 \\
      \bottomrule
    \end{tabular}
  \end{threeparttable}
\end{table}

\begin{table}[H]
  \centering
  \caption{Performance comparison on synthetic CVRP instances}
  \label{tab:cvrp_synthetic}
  \footnotesize
  \begin{threeparttable}
    \setlength{\tabcolsep}{3pt}
    \begin{tabular}{lcccccc}
      \toprule
      \multirow{2}{*}{Method} &
      \multicolumn{2}{c}{\textbf{CVRP20}} &
      \multicolumn{2}{c}{\textbf{CVRP50}} &
      \multicolumn{2}{c}{\textbf{CVRP100}} \\
      \cmidrule(lr){2-3}\cmidrule(lr){4-5}\cmidrule(lr){6-7}
      & Gap (\%) & Time (s) & Gap (\%) & Time (s) & Gap (\%) & Time (s) \\
      \midrule
      GLS           & 1.052 & 1.416 & 5.494 & 5.711 & 8.362 & 17.162 \\
      FA-GLS & 0.860 & 3.908 & 4.592 & 17.728 & 4.523 & 18.797 \\
      ReEvo         & 1.174 & 5.392 & 5.782 & 18.293 & 11.276 & 20.577 \\
      EoH           & 2.018 & 3.992 & 8.008 & 13.619 & 10.852 & 18.961 \\
      LLM-HCJG      & \textbf{0.799} & 1.305 & \textbf{0.927} & 10.315 & \textbf{3.526} & 20.687 \\
      \bottomrule
    \end{tabular}
  \end{threeparttable}
\end{table}

Across both problems, FA-GLS improves upon the original GLS, reducing the large-instance gaps from 1.212\% to 0.140\% on TSP100 and from 8.362\% to 4.523\% on CVRP100, thereby confirming the contribution of the added search mechanisms. On top of this strengthened framework, LLM-HCJG achieves the best overall solution quality among the evaluated baselines, attaining gaps of 0.070\% on TSP100 and 3.526\% on CVRP100. Compared with the prior LLM-enhanced baselines, LLM-HCJG reduces the TSP100 gap from 0.147\% for EoH and 0.293\% for ReEvo to 0.070\%, and the CVRP100 gap from 10.852\% and 11.276\% to 3.526\%, respectively. This consistent advantage suggests that jointly adapting the construction and guidance components provides a more effective search bias than optimizing the edge-distance update rule alone.
\subsection{Performance Comparison on TSPLIB and CVRPLIB}
\label{Experiment 2}
To further assess cross-instance transfer, experiments are conducted on public benchmark instances from TSPLIB and CVRPLIB. The 29 TSPLIB instances follow the selection from the original EoH study. 12 representative instances are chosen from the A, B, E, P, M, and X series, spanning scales from small to near-200, sparse to dense fleet settings, and both capacity-tight and long-route structures. Results denote the performance gaps (\%) to the best-known reference solutions. Number denotes the count of instances achieving the best or tied-best result.

Tables~\ref{apx:tsplib-results} and~\ref{apx:cvrplib-results} report the detailed benchmark results. On TSPLIB, LLM-HCJG achieves the lowest average gap of 0.017\% and the best or tied-best performance on 28 of 29 instances. FA-GLS already reduces the average gap of GLS from 1.119\% to 0.140\%, but LLM-HCJG further improves it to 0.017\%, outperforming EoH (3.407\%) and ReEvo (0.030\%). The CVRPLIB results show an even clearer separation. LLM-HCJG obtains the best or tied-best result on all 12 instances and reduces the average gap to 1.686\%, compared with GLS (6.240\% ), FA-GLS (6.152\%), EoH (3.601\%), and ReEvo (9.451\%). Unlike TSPLIB, CVRPLIB exhibits stronger constraint-induced sensitivity, where FA-GLS yields less stable gains and may deteriorate on capacity-tight or structurally distinct instances. These results indicate that joint component evolution improves not only the underlying search framework but also the cross-instance generalization of LLM-designed heuristics on heterogeneous structures.

The consistent advantage of LLM-HCJG is particularly useful for constrained multi-route problems. Representative route-level comparisons for \texttt{E-n76-k10} and \texttt{P-n55-k15} are shown in Figures~\ref{fig:e-n76-k10-route} and~\ref{fig:p-n55-k15-route}, respectively. In several baseline solutions, especially FA-GLS and ReEvo, routes tend to connect customers across spatially separated regions, producing long inter-region arcs and unnecessary returns around the depot, which directly increases the total travel cost. For the LLM-enhanced methods, \texttt{LLM\_update\_edge\_distance} introduces diversification but does not explicitly encode route compactness, cross-region separation, or the coupling between construction quality and feasible route-level perturbation. Moreover, \texttt{LLM\_select\_next\_node} is fixed to the nearest-feasible-customer rule, leaving structural biases in the initial route partition only partially repairable by later penalty-guided search. In contrast, LLM-HCJG jointly evolves two search direction components, aligning initial route formation with subsequent edge penalization. Its advantage thus comes not merely from stronger perturbation, but from more accurate localization of where perturbation should be applied, helping preserve geographically coherent customer clusters and guide endpoint-based search toward edges that better correct route-level structure.
\begin{figure}[!b]
  \centering
  \includegraphics[width=\textwidth]{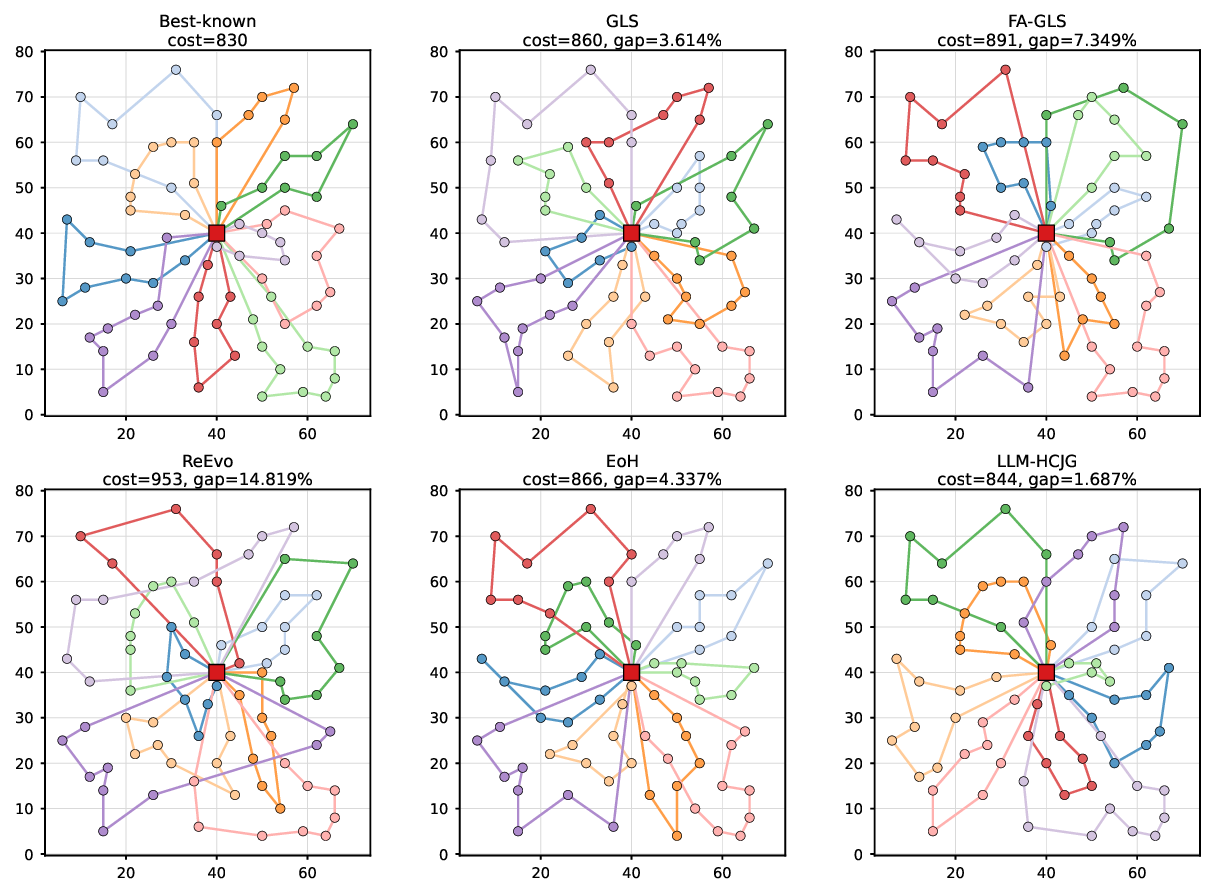}
  \caption{Route-level comparison on \texttt{E-n76-k10}.}
  \label{fig:e-n76-k10-route}
\end{figure}

\begin{figure}[!t]
  \centering
  \includegraphics[width=\textwidth]{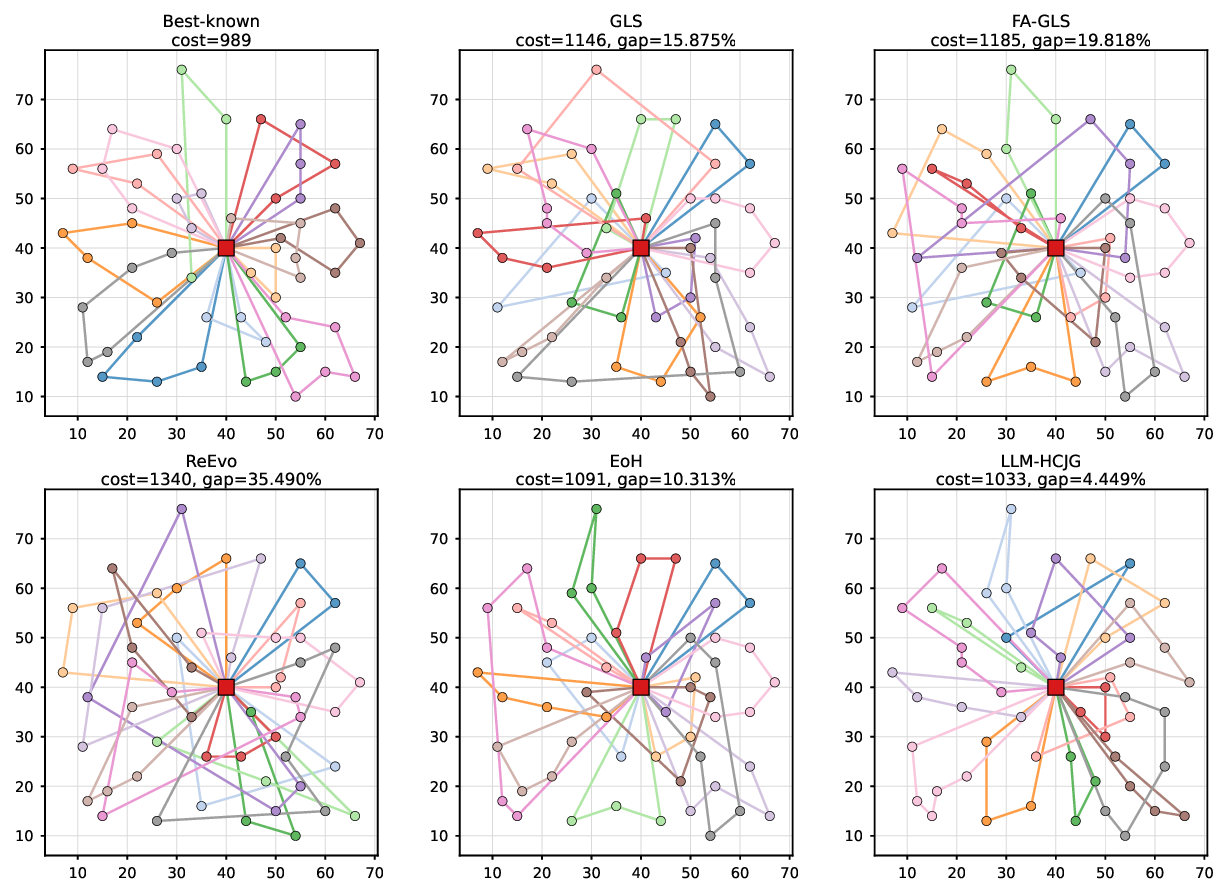}
  \caption{Route-level comparison on \texttt{P-n55-k15}.}
  \label{fig:p-n55-k15-route}
\end{figure}

\subsection{Ablation Studies on Joint Combinatorial Dependency coevolution}
\label{SubSec:Ablation Joint Coevolution}

Ablation studies on joint combinatorial dependency coevolution are conducted on TSP, where publicly available EoH studies provide a reliable reference for recombination~\citep{liu2023algorithm,9e7eceac6e31432aafced5e02a49de16}. Four component pairings are evaluated under the same algorithmic framework as Appendix~\ref{apx:llmgls-tsp}, differing only in the sources of the initialization and penalty-construction modules: EoH--EoH, EoH--HCJG, HCJG--EoH, and HCJG--HCJG. Detailed results appear in Appendix~\ref{app:joint-coevolution-ablation}.

Tables~\ref{tab:ablation_synthetic_tsp} and~\ref{tab:ablation_tsplib},
together with Figure~\ref{fig:ablation-tsplib-nonidentical-gaps}, show that HCJG--HCJG achieves the strongest overall performance. On synthetic instances, it attains the best or tied-best gap at all scales and reduces the TSP100 gap to 0.070\%, compared with 0.082\%, 0.152\%, and 0.159\% for the other combinations. On TSPLIB, it also obtains the lowest mean gap of 0.017\%, with EoH--HCJG (0.030\%), HCJG--EoH (0.611\%), and EoH--EoH (2.788\%) yielding larger gaps.
The asymmetric degradation of the two hybrid variants indicates that the intact jointly evolved pair benefits from cross-component compatibility rather than simple recombination of independently designed modules. The results above support Theorems~\ref{thm:nonseparability} and~\ref{thm:blueprint}, indicating both the nonzero interaction \(\mathfrak{I}\) and a higher probability of falling in the consistency set \(\mathcal{C}\).

The paired analyses in Appendices~\ref{apx:stat-public} and~\ref{apx:stat-ablation} cover the public benchmarks and cross-component ablation. All bootstrap 95\% confidence intervals lie above zero, supporting stable performance advantages under instance-level resampling, while all two-sided Wilcoxon signed-rank tests satisfy $p<0.05$, confirming statistically significant paired differences.
\section{Conclusion}
\label{Sec:Conclusion}
This paper develops LLM-HCJG, a joint-generation framework that advances LLM-enhanced evolutionary computation from isolated heuristic construction to coordinated component coevolution. By jointly evolving the solution-initialization and penalty-construction components under a shared design blueprint, LLM-HCJG reshapes the search direction of GLS while preserving its interpretable backbone across both TSP and CVRP. Theoretical and ablation analyses confirm their non-separability and effectiveness in terms of pairwise compatibility. Experiments on TSP/CVRP synthetic instances and public benchmarks demonstrate strong solution quality, efficient training, and robust generalization, consistently surpassing single-component LLM-enhanced baselines. These findings suggest that LLMs can move beyond heuristic rule generation toward shaping the structural search bias of concrete algorithmic frameworks.

Future research will focus on three improvements: (i) scalable deployment on larger and more heterogeneous instances, (ii) tighter integration of joint heuristic evolution with model fine-tuning for higher-quality generation, and (iii) extension of joint interacting-component generation to richer routing environments and other metaheuristic frameworks.

\bibliographystyle{apalike}
\spacingset{1}
\bibliography{IISE-Trans}

\clearpage
\appendix
\section{Prompt Blocks Utilized within the LLM-HCJG}
\label{app:tsp-prompt}

\subsection{Basic Attribute Description for TSP}
\label{apx:tsp-basic-attributes}

\begin{tcolorbox}[
    title={Basic Attribute Description For TSP},
    width=\textwidth,
    breakable
]

\textbf{\color{brown}self.problem\_description}\par\noindent
{\ttfamily
Travelling Salesman Problem (TSP): given a set of cities and the
pairwise distances between them, find a shortest Hamiltonian cycle
that visits each city exactly once and returns to the starting city.
}

\vspace{10pt}
\hrule
\vspace{10pt}

\textbf{\large Solution Initialization
(\texttt{select\_next\_node})}

\vspace{6pt}
\textbf{\color{brown}self.prompt\_task1}\par\noindent
{\ttfamily
Given a set of nodes with their coordinates, please help me design a strategy to find the shortest route that visits each node once and returns to the starting node. The task can be solved step-by-step by starting from the current node and iteratively choosing the next node. Help me design a novel heuristic that is different from the literature to select the next node in each step.
}

\vspace{6pt}
\textbf{\color{brown}self.prompt\_func\_name1}\par\noindent
\texttt{select\_next\_node}

\vspace{6pt}
\textbf{\color{brown}self.prompt\_func\_inputs1}\par\noindent
\texttt{current\_node},
\texttt{destination\_node},
\texttt{unvisited\_nodes},
\texttt{distance\_matrix}

\vspace{6pt}
\textbf{\color{brown}self.prompt\_func\_outputs1}\par\noindent
\texttt{next\_node}

\vspace{6pt}
\textbf{\color{brown}self.prompt\_inout\_inf1}\par\noindent
{\ttfamily
The function signature must exactly be
def select\_next\_node. 'current\_node', 'destination\_node', 'next\_node', and
'unvisited\_nodes' (one-dimensional array) are node IDs which must be integers. 'distance\_matrix' (two-dimensional array with shape (100,100)) is the distance matrix of nodes. The function must return exactly one integer node ID selected from unvisited\_nodes, not an index position, score, array, list, tuple, None, NaN, or infinity.
}

\vspace{10pt}
\hrule
\vspace{10pt}

\textbf{\large Penalty Construction
(\texttt{update\_edge\_distance})}

\vspace{6pt}
\textbf{\color{brown}self.prompt\_task2}\par\noindent
{\ttfamily
Given an edge distance matrix, a local optimal tour, and edge usage counts, design a signed global edge-guidance strategy for TSP. The returned matrix should express how bad or good it is to include each
edge in a solution: Here gap means updated\_edge\_distance -
edge\_distance: positive gap means the edge should be selectively penalized or avoided, with rank-discriminative signals sufficiently large to affect top-k perturbation while remaining finite and bounded relative to edge distances, negative gap means the edge should be encouraged or preserved, and zero gap means no guidance. 
}

\vspace{6pt}
\textbf{\color{brown}self.prompt\_func\_name2}\par\noindent
\texttt{update\_edge\_distance}

\vspace{6pt}
\textbf{\color{brown}self.prompt\_func\_inputs2}\par\noindent
\texttt{edge\_distance},
\texttt{local\_opt\_tour},
\texttt{edge\_n\_used}

\vspace{6pt}
\textbf{\color{brown}self.prompt\_func\_outputs2}\par\noindent
\texttt{updated\_edge\_distance}

\vspace{6pt}
\textbf{\color{brown}self.prompt\_inout\_inf2}\par\noindent
{\ttfamily
The function signature must exactly be def update\_edge\_distance. 'local\_opt\_tour' is a one-dimensional integer array with shape (100,), ordered from node 0's successor to node 0 without explicit closure. 'edge\_distance' (two-dimensional array with shape (100,100)) and 'edge\_n\_used' (two-dimensional array with shape (100,100)) are matrices. 'edge\_n\_used' includes the number of times each edge is used during perturbation and may contain zeros. The function must return updated\_edge\_distance as a numeric NumPy-compatible matrix with exactly the same shape as edge\_distance, not a scalar, list, tuple, None, NaN, or infinity. Do not modify edge\_n\_used in-place. The returned matrix should be symmetric, finite, and non-negative. Avoid numerically explosive formulas such as 1/(edge\_n\_used + 1e-6), edge\_n\_used**2.
}

\vspace{10pt}
\hrule
\vspace{10pt}

\textbf{\color{brown}self.prompt\_other\_inf}\par\noindent
{\ttfamily
The node indices range from 0 to 99. Import the necessary Python components. When calling any function, match the number, names and types of arguments to the function signature. Don't miss the 'return' at the end. Avoid None, NaN or infinite. All are NumPy arrays by using NumPy methods or convert to other types before applying non-NumPy method. In order to avoid divide-by-zero, add a small epsilon (e.g.\ +1e-6) to denominators every dividing. Do not call any helper function or use any auxiliary variable unless it is fully defined inside the same function. Treat node IDs and array positions separately. Never index a candidate-length array by node ID or apply element-wise operations to arrays with unequal lengths. Each function must use only its own input arguments and locally defined variables; do not use variables from the other function. Never use edge\_n\_used in select\_next\_node; edge\_n\_used may only be used in update\_edge\_distance. If a quantity is not in the function signature, do not use it unless you can compute it directly from the provided inputs inside the same function.
}
\end{tcolorbox}

\subsection{Basic Attribute Description for CVRP}
\label{apx:cvrp-basic-attributes}

\begin{tcolorbox}[
    title={Basic Attribute Description For CVRP},
    width=\textwidth,
    breakable
]

\textbf{\color{brown}self.problem\_description}\par\noindent
{\ttfamily
Capacitated Vehicle Routing Problem (CVRP): given a depot, a set of customers with known demands, pairwise travel distances, and identical vehicles with a fixed capacity, construct feasible routes that each start and end at the depot, serve every customer exactly once without exceeding vehicle capacity, and use exactly the prescribed number K of non-empty vehicle routes while minimizing the total routing cost.
}

\vspace{10pt}
\hrule
\vspace{10pt}

\textbf{\large Solution Initialization
(\texttt{select\_next\_node})}

\vspace{6pt}
\textbf{\color{brown}self.prompt\_task1}\par\noindent
{\ttfamily
Given the current vehicle node, the feasible customers that can be served without violating the capacity constraint, the remaining unserved customers, the remaining vehicle capacity, the customer demands, and the distance matrix, please help me design a strategy to select the next customer during CVRP route construction. The task can be solved step-by-step by starting from the current vehicle node and iteratively choosing the next feasible customer until all customers are served. The goal is to build a high-quality feasible initial solution. Help me design a novel algorithm that is different from the algorithms in literature.
}

\vspace{6pt}
\textbf{\color{brown}self.prompt\_func\_name1}\par\noindent
\texttt{select\_next\_node}

\vspace{6pt}
\textbf{\color{brown}self.prompt\_func\_inputs1}\par\noindent
{\raggedright
\texttt{current\_node},
\texttt{feasible\_customers},
\texttt{remaining\_customers},
\texttt{remaining\_capacity},
\texttt{demands},
\texttt{distance\_matrix}
\par}

\vspace{6pt}
\textbf{\color{brown}self.prompt\_func\_outputs1}\par\noindent
\texttt{next\_node}

\vspace{6pt}
\textbf{\color{brown}self.prompt\_inout\_inf1}\par\noindent
{\ttfamily
The function signature must exactly be def select\_next\_node. 'current\_node' and returned 'next\_node' are node IDs and must be integers. 'feasible\_customers' and 'remaining\_customers' are one-dimensional arrays of customer IDs and do not include the depot. 'feasible\_customers' is a subset of 'remaining\_customers'. 'remaining\_capacity' is a non-negative integer.  demands' is a one-dimensional array and demands[0] = 0 for the depot. 'distance\_matrix' is a symmetric two-dimensional array with shape (n, n). When this function is called, 'feasible\_customers' is guaranteed to be non-empty. The function must return exactly one integer customer ID selected from feasible\_customers, not the depot, an index position, score, array, list, tuple, None, NaN, or infinity.
}

\vspace{10pt}
\hrule
\vspace{10pt}

\textbf{\large Penalty Construction
(\texttt{update\_edge\_distance})}

\vspace{6pt}
\textbf{\color{brown}self.prompt\_task2}\par\noindent
{\ttfamily
Given the current edge distance matrix, locally optimal CVRP routes, edge usage counts, customer demands, and vehicle capacity, design a signed global edge-guidance strategy for CVRP. The returned matrix should express how bad or good it is to include each edge in a feasible solution: Here gap means updated\_edge\_distance - edge\_distance: positive gap means the edge should be selectively penalized or avoided, with rank-discriminative signals sufficiently large to affect top-k perturbation while remaining finite and bounded relative to edge distances, negative gap means the edge should be encouraged or preserved, and zero gap means no guidance.
}

\vspace{6pt}
\textbf{\color{brown}self.prompt\_func\_name2}\par\noindent
\texttt{update\_edge\_distance}

\vspace{6pt}
\textbf{\color{brown}self.prompt\_func\_inputs2}\par\noindent
\texttt{edge\_distance},
\texttt{local\_opt\_routes},
\texttt{edge\_n\_used},
\texttt{demands},
\texttt{vehicle\_capacity}

\vspace{6pt}
\textbf{\color{brown}self.prompt\_func\_outputs2}\par\noindent
\texttt{updated\_edge\_distance}

\vspace{6pt}
\textbf{\color{brown}self.prompt\_inout\_inf2}\par\noindent
{\ttfamily
The function signature must exactly be def update\_edge\_distance. 'edge\_distance' and 'edge\_n\_used' are two-dimensional arrays with shape (n, n). 'local\_opt\_routes' is a two-dimensional integer array with shape (n\_routes, max\_route\_len); each row lists customer IDs in visiting order, uses -1 as padding, and does not include the depot. Also score depot boundary edges as well as customer-to-customer edges. Before processing each route, use customers = route[route >= 0] and only process customers; never use np.count\_nonzero(route) to determine its valid length. 'demands' is a one-dimensional array and demands[0] = 0. 'vehicle\_capacity' is a positive integer. The function must return updated\_edge\_distance as a numeric NumPy-compatible matrix with exactly the same shape as edge\_distance, not a scalar, list, tuple, None, NaN, or infinity. Do not modify edge\_n\_used in-place. The returned matrix should be symmetric, finite, non-negative, and keep the diagonal equal to zero. Avoid numerically explosive formulas such as 1/(edge\_n\_used + 1e-6), edge\_n\_used**2. The returned matrix is also used as the guided distance matrix for endpoint-driven local search in the perturbation phase.
}

\vspace{10pt}
\hrule
\vspace{10pt}

\textbf{\color{brown}self.prompt\_other\_inf}\par\noindent
{\ttfamily
Node 0 is the depot. Import the necessary Python components. When calling any function, match the number, names and types of arguments to the function signature. Don't miss the 'return' at the end. Avoid None, NaN or infinite. In order to avoid divide-by-zero, add a small epsilon to denominators every dividing. Use only variables from the current function signature or values locally computed from them; initialize every auxiliary variable on all execution paths before use. Array inputs are NumPy arrays: use NumPy-compatible operations or explicit conversion, respect the documented dimensions, distinguish node IDs from array positions, and check length/shape before indexing or broadcasting. Never use edge\_n\_used in select\_next\_node; edge\_n\_used may only be used in update\_edge\_distance. Avoid overly aggressive heuristics: do not overemphasize tiny demands or use high-order powers that make scores or penalties extreme and unstable.
}
\end{tcolorbox}

\subsection{LLM-HCJG Initialization (i1) Prompt Specification}
\label{apx:i1}

\begin{tcolorbox}[
    title={Initialization (i1)},
    width=\textwidth,
    breakable
]

\textbf{Evolution-specific hints}\par\noindent
{\ttfamily
You are an expert in heuristic algorithm design for the following
optimization problem
\textbf{\color{brown}self.problem\_description}.
}

\vspace{10pt}
\hrule
\vspace{10pt}

\textbf{Task description}\par\noindent
{\ttfamily
The first task called 'Task1' is that
\textbf{\color{brown}self.prompt\_task1}. Firstly, start with the
title 'Algorithm1:' to describe your new algorithm and main steps
in one sentence. The description must be inside a brace. Next, start
with the title 'Code1:' to implement it in Python as a function named
\textbf{\color{brown}self.prompt\_func\_name1}. This function should
accept
\textbf{str(len(\textcolor{brown}{self.prompt\_func\_inputs1}))}
input(s):
\textbf{\color{brown}self.prompt\_func\_inputs1}. The function should
return
\textbf{str(len(\textcolor{brown}{self.prompt\_func\_outputs1}))}
output(s):
\textbf{\color{brown}self.prompt\_func\_outputs1}.
\textbf{\color{brown}self.prompt\_inout\_inf1} +
\textbf{\color{brown}self.prompt\_other\_inf1}.

The second task called 'Task2' is that
\textbf{\color{brown}self.prompt\_task2}. Firstly, start with the
title 'Algorithm2:' to describe your new algorithm and main steps
in one sentence. The description must be inside a brace. Next, start
with the title 'Code2:' to implement it in Python as a function named
\textbf{\color{brown}self.prompt\_func\_name2}. This function should
accept
\textbf{str(len(\textcolor{brown}{self.prompt\_func\_inputs2}))}
input(s):
\textbf{\color{brown}self.prompt\_func\_inputs2}. The function should
return
\textbf{str(len(\textcolor{brown}{self.prompt\_func\_outputs2}))}
output(s):
\textbf{\color{brown}self.prompt\_func\_outputs2}.
\textbf{\color{brown}self.prompt\_inout\_inf2} +
\textbf{\color{brown}self.prompt\_other\_inf2}.
}

\vspace{10pt}
\hrule
\vspace{10pt}

\textbf{Associations between components identification}\par\noindent
{\ttfamily
Output `Shared Blueprint` consists of a single sentence articulating a common search-direction design principle between Task 1 and Task 2. It should define their complementary roles and explain how both component-level algorithms jointly instantiate this principle, rather than operate independently.
}

\vspace{10pt}
\hrule
\vspace{10pt}

\textbf{Other hints}\par\noindent
{\ttfamily
Do not give additional explanations.
}

\end{tcolorbox}

\subsection{LLM-HCJG Crossover (e1) Prompt Specification}
\label{apx:e1}

\begin{tcolorbox}[
    title={Crossover (e1)},
    width=\textwidth,
    breakable
]

\textbf{Task description}\par\noindent
{\ttfamily
There are two task-algorithms for you to design. Task 1:
\textbf{\color{brown}self.prompt\_task1}; Task 2:
\textbf{\color{brown}self.prompt\_task2}.
}

\vspace{10pt}
\hrule
\vspace{10pt}

\textbf{Parent algorithm(s)}\par\noindent
{\ttfamily
Now I have \textbf{str(len(indivs))} existing individuals with their
hared blueprints, algorithms, and code for the two tasks as follows:

{
\color{blue}\textbf{No.1 individual}\\
\color{blue}
Shared Blueprint: indivs[1][blueprint]\\
\color{blue}
For task1: Algorithm1: \textbf{indivs[1][algorithm1]} +
Code1:\textbf{indivs[1][code1]}\\
\color{blue}
For task2: Algorithm2: \textbf{indivs[1][algorithm2]} +
Code2:\textbf{indivs[1][code2]}
}

\medskip
{
\color{purple}\textbf{No.2 individual}\\
\color{purple}
Shared Blueprint: indivs[2][blueprint]\\
\color{purple}
For task1: Algorithm1: \textbf{indivs[2][algorithm1]} +
Code1:\textbf{indivs[2][code1]}\\
\color{purple}
For task2: Algorithm2: \textbf{indivs[2][algorithm2]} +
Code2:\textbf{indivs[2][code2]}
}

\medskip

...
}

\vspace{10pt}
\hrule
\vspace{10pt}

\textbf{Evolution-specific hints}\par\noindent
{\ttfamily
Please help me create a new individual with new algorithms
(Algorithm1 and Algorithm2) that have totally different forms from
the given ones.
}

\vspace{10pt}
\hrule
\vspace{10pt}

\textbf{Associations between components identification}\par\noindent
{\ttfamily
Output `Shared Blueprint` consists of a single sentence articulating a common search-direction design principle between Task 1 and Task 2. It should define their complementary roles and explain how both component-level algorithms jointly instantiate this principle, rather than operate independently.
}

\vspace{10pt}
\hrule
\vspace{10pt}

\textbf{Other hints}\par\noindent
{\ttfamily
Do not give additional explanations.
\par}

\end{tcolorbox}

\subsection{LLM-HCJG Crossover (e2) Prompt Specification}
\label{apx:e2}

\begin{tcolorbox}[
    title={Crossover (e2)},
    width=\textwidth,
    breakable
]

\textbf{Task description}\par\noindent
{\ttfamily
There are two task-algorithms for you to design. Task 1:
\textbf{\color{brown}self.prompt\_task1}; Task 2:
\textbf{\color{brown}self.prompt\_task2}.
}

\vspace{10pt}
\hrule
\vspace{10pt}

\textbf{Parent algorithm(s)}\par\noindent
{\ttfamily
Now I have \textbf{str(len(indivs))} existing individuals with their
shared blueprints, algorithms, and codes for the two tasks as follows:

{
\color{blue}\textbf{No.1 individual}\\
\color{blue}
Shared Blueprint: indivs[1][blueprint]\\
\color{blue}
For task1: Algorithm1: \textbf{indivs[1][algorithm1]} +
Code1:\textbf{indivs[1][code1]}\\
\color{blue}
For task2: Algorithm2: \textbf{indivs[1][algorithm2]} +
Code2:\textbf{indivs[1][code2]}
}

\medskip
{
\color{purple}\textbf{No.2 individual}\\
\color{purple}
Shared Blueprint: indivs[2][blueprint]\\
\color{purple}
For task1: Algorithm1: \textbf{indivs[2][algorithm1]} +
Code1:\textbf{indivs[2][code1]}\\
\color{purple}
For task2: Algorithm2: \textbf{indivs[2][algorithm2]} +
Code2:\textbf{indivs[2][code2]}
}

\medskip

...
}

\vspace{10pt}
\hrule
\vspace{10pt}

\textbf{Evolution-specific hints}\par\noindent
{\ttfamily
Please help me create a new individual with new algorithms
(Algorithm1 and Algorithm2) that have totally different forms from
the given ones, but can be inspired by them.
}

\vspace{10pt}
\hrule
\vspace{10pt}

\textbf{Associations between components identification}\par\noindent
{\ttfamily
Output `Shared Blueprint` consists of a single sentence articulating a common search-direction design principle between Task 1 and Task 2. It should define their complementary roles and explain how both component-level algorithms jointly instantiate this principle, rather than operate independently.
}

\vspace{10pt}
\hrule
\vspace{10pt}

\textbf{Other hints}\par\noindent
{\ttfamily
Do not give additional explanations.
\par}

\end{tcolorbox}

\subsection{LLM-HCJG Mutation (m1) Prompt Specification}
\label{apx:m1}

\begin{tcolorbox}[
    title={Mutation (m1)},
    width=\textwidth,
    breakable
]

\textbf{Task description}\par\noindent
{\ttfamily
There are two task-algorithms for you to design. Task 1:
\textbf{\color{brown}self.prompt\_task1}; Task 2:
\textbf{\color{brown}self.prompt\_task2}.
}

\vspace{10pt}
\hrule
\vspace{10pt}

\textbf{Parent algorithm(s)}\par\noindent
{\ttfamily
Now I have
{\color{green!60!black}\textbf{one elite individual}}
with its shared blueprint, algorithm, and code as follows:

{\color{green!60!black}
Shared Blueprint: \textbf{indivs[blueprint]}\\
For task1: Algorithm1: \textbf{indivs[algorithm1]} +
Code1:\textbf{indivs[code1]}\\
\color{green!60!black}
For task2: Algorithm2: \textbf{indivs[algorithm2]} +
Code2:\textbf{indivs[code2]}
}
}

\vspace{10pt}
\hrule
\vspace{10pt}

\textbf{Evolution-specific hints}\par\noindent
{\ttfamily
Please assist me in creating a new individual with new algorithms
(Algorithm1 and Algorithm2) that have different forms but can be
modified versions of the algorithms provided.
}

\vspace{10pt}
\hrule
\vspace{10pt}

\textbf{Associations between components identification}\par\noindent
{\ttfamily
Output `Shared Blueprint` consists of a single sentence articulating a common search-direction design principle between Task 1 and Task 2. It should define their complementary roles and explain how both component-level algorithms jointly instantiate this principle, rather than operate independently.
}

\vspace{10pt}
\hrule
\vspace{10pt}

\textbf{Other hints}\par\noindent
{\ttfamily
Do not give additional explanations.
\par}

\end{tcolorbox}

\subsection{LLM-HCJG Mutation (m2) Prompt Specification}
\label{apx:m2}

\begin{tcolorbox}[
    title={Mutation (m2)},
    width=\textwidth,
    breakable
]

\textbf{Task description}\par\noindent
{\ttfamily
There are two task-algorithms for you to design. Task 1:
\textbf{\color{brown}self.prompt\_task1}; Task 2:
\textbf{\color{brown}self.prompt\_task2}.
}

\vspace{10pt}
\hrule
\vspace{10pt}

\textbf{Parent algorithm(s)}\par\noindent
{\ttfamily
Now I have
{\color{green!60!black}\textbf{one elite individual}}
with its shared blueprint, algorithm, and code as follows:

{
\textcolor{green!60!black}{
  Shared Blueprint: \textbf{indivs[blueprint]}
}\\
\textcolor{green!60!black}{
  For task1: Algorithm1: \textbf{indivs[algorithm1]} +
  Code1: \textbf{indivs[code1]}
}\\
\textcolor{green!60!black}{
  For task2: Algorithm2: \textbf{indivs[algorithm2]} +
  Code2: \textbf{indivs[code2]}
}
}
}

\vspace{10pt}
\hrule
\vspace{10pt}

\textbf{Evolution-specific hints}\par\noindent
{\ttfamily
Please identify the main algorithm parameters and assist me in
creating a new individual with new algorithms (Algorithm1 and
Algorithm2) that have different parameter settings of the score
functions provided.
}

\vspace{10pt}
\hrule
\vspace{10pt}

\textbf{Associations between components identification}\par\noindent
{\ttfamily
Output `Shared Blueprint` consists of a single sentence articulating a common search-direction design principle between Task 1 and Task 2. It should define their complementary roles and explain how both component-level algorithms jointly instantiate this principle, rather than operate independently.
}

\vspace{10pt}
\hrule
\vspace{10pt}

\textbf{Other hints}\par\noindent
\begingroup
\ttfamily
Do not give additional explanations.
\par
\endgroup

\end{tcolorbox}

\subsection{LLM-HCJG Mutation (m3) Prompt Specification}
\label{apx:m3}

\begin{tcolorbox}[
    title={Mutation (m3)},
    width=\textwidth,
    breakable
]

\textbf{Task description}\par\noindent
{\ttfamily
There are two task-algorithms for you to design. Task 1: \textbf{\color{brown}self.prompt\_task1}; Task 2:
\textbf{\color{brown}self.prompt\_task2}.
}

\vspace{10pt}
\hrule
\vspace{10pt}

\textbf{Parent algorithm(s)}\par\noindent
{\ttfamily
Now I have
{\color{green!60!black}\textbf{one elite individual}}
with its shared blueprint, algorithm, and code as follows:

{\color{green!60!black}
Shared Blueprint: \textbf{indivs[blueprint]}\\
\color{green!60!black}
For task1: Algorithm1: \textbf{indivs[algorithm1]} +
Code1:\textbf{indivs[code1]}\\
\color{green!60!black}
For task2: Algorithm2: \textbf{indivs[algorithm2]} +
Code2:\textbf{indivs[code2]}
}
}

\vspace{10pt}
\hrule
\vspace{10pt}

\textbf{Evolution-specific hints}\par\noindent
{\ttfamily
Please assist me in creating a new individual with new algorithms
(Algorithm1 and Algorithm2) that are simplified versions of the
provided algorithms, focusing on reducing unnecessary complexity
while preserving their essential functionality.
}

\vspace{10pt}
\hrule
\vspace{10pt}

\textbf{Associations between components identification}\par\noindent
{\ttfamily
Output `Shared Blueprint` consists of a single sentence articulating a common search-direction design principle between Task 1 and Task 2. It should define their complementary roles and explain how both component-level algorithms jointly instantiate this principle, rather than operate independently.
}

\vspace{10pt}
\hrule
\vspace{10pt}

\textbf{Other hints}\par\noindent
{\ttfamily
Do not give additional explanations.
\par}

\end{tcolorbox}

\section{Guided Local Search Realizations for Routing Optimization}
\label{app:routing-gls-frameworks}

\subsection{Canonical GLS Procedure for TSP}
\label{apx:tsp-gls}

\begin{algorithm}[!ht]
\normalsize
\caption{Guided Local Search for TSP}
\label{alg:gls}
\begin{algorithmic}[1]
  \Require Distance matrix $ \mathcal{D}$, maximum iterations $T$, scaling coefficient $\alpha$, current route $S$, edge penalty $ \mathcal{P}$ and number of customers $n$
  \Ensure Best route $S^*$       

  \State $S \gets \textbf{Nearest\_neighbor\_2End}( \mathcal{D})$
  \State $S \gets  \textbf{Local\_search}_{\text{2-opt+relocate}}(S, \mathcal{D})$
  \State $S^* \gets S$, $cost^* \gets \textbf{Cost}( \mathcal{D},S)$
  \State $N \gets |S|$ 
  \State $\mathcal{P} \gets \mathbf{0}$ 
  \State $\lambda \gets \alpha \cdot \dfrac{cost^*}{n}$

  \For{$k = 1$ \textbf{to} $T$}
    \State $u_{\max} \gets -\infty$
    \State $(i^\star, j^\star) \gets \emptyset$
    \ForAll{$(i, j)$ in $S$}
      \State $u \gets \dfrac{d_{ij}}{1+p_{ij}}$
      \If{$u > u_{\max}$}
        \State $u_{\max} \gets u$
        \State $(i^\star, j^\star) \gets (i, j)$
      \EndIf
    \EndFor

     \State $\mathcal{P}[i^\star,j^\star]\!+\!=1$, $\mathcal{P}[j^\star,i^\star]\!+\!=1$ 

    \State $\mathcal{D}^{\prime} \gets \mathcal{D} + \lambda \cdot \mathcal{P}$
    \State $S \gets  \textbf{Local\_search}_{\text{2-opt+relocate}}(S, \mathcal{D}^{\prime})$

    \State $cost \gets \textbf{Cost}(\mathcal{D}, S)$ 
    \If{$cost < cost^*$}
      \State $S^* \gets S$, $cost^* \gets $ $cost$
    \EndIf
  \EndFor

  \State \Return $S^*$
\end{algorithmic}
\end{algorithm}

\clearpage
\subsection{LLM-Driven GLS Procedure for TSP}
\label{apx:llmgls-tsp}

\begin{algorithm}[!ht]
\normalsize
\caption{LLM-driven Guided Local Search for TSP}
\label{alg:LLMgls}
\begin{algorithmic}[1]
  \Require Distance matrix $\mathcal{D}$, maximum iterations $T$, maximum perturbation moves per iteration $p$, current route $S$ and edge penalty $\mathcal{P}$
  \Ensure Best route $S^*$   

\State $S \gets \textbf{LLM\_select\_next\_node}(\mathcal{D}) $  \hfill \texttt{// LLM Solution initialization}
  \State $S \gets \textbf{Local\_search}_{\text{2-opt+relocate}}(S,\mathcal{D})$
  \State $S^* \gets S$, $cost^* \gets \textbf{Cost}(\mathcal{D},S)$
  
  \State $\mathcal{P} \gets \mathbf{0}$ 
  \For{$k = 1$ \textbf{to} $T$}
    \For{$h = 1$ \textbf{to} $p$}
      \State $D^{\prime} \gets \textbf{LLM\_update\_edge\_distance}(\mathcal{D},\,S,\,\mathcal{P})$  \hfill \texttt{// LLM Penalty construction}

      \State $\Delta \gets max(\mathcal{D}^{\prime}-\mathcal{D},0)$
      \State $\mathcal{E} \gets \text{Top5Edges}(\Delta)$

      \ForAll{edge $(i,j)$ in $\mathcal{E}$}
        \State $\mathcal{P}[i,j]\!+\!=1$, $\mathcal{P}[j,i]\!+\!=1$ 
        \State $\delta,S' \gets \textbf{Two-Opt}(S,i,j,\mathcal{D}^{\prime})$
        \If{$\delta<0$}
          \State $S \gets S'$, $cost \gets \textbf{Cost}(\mathcal{D},S)$
        \EndIf

        \State $\delta,S' \gets \textbf{Relocate}(S,i,j,\mathcal{D}^{\prime})$
        \If{$\delta<0$}
          \State $S \gets S'$, $cost \gets \textbf{Cost}(\mathcal{D},S)$
        \EndIf
      \EndFor
    \EndFor
    \State $S \gets \textbf{Local\_search}_{\text{2-opt+relocate}}(S,\mathcal{D})$
    \State $cost \gets \textbf{Cost}(\mathcal{D},S)$
    \If{$cost < cost^*$}
      \State $S^* \gets S$, $cost^* \gets cost$
    \EndIf
    
    \If{$k \bmod 50 = 0$}
      \State $S,\,cost \gets S^*,\,cost^*$
    \EndIf
   
  \EndFor
  \State \Return $S^*$
\end{algorithmic}
\end{algorithm}

\clearpage
\subsection{Transferred GLS Procedure for CVRP}
\label{apx:cvrp-gls}

\begin{algorithm}[!ht]
\caption{Guided Local Search for CVRP}
\label{alg:cvrp_gls}
\begin{algorithmic}[1]
  \Require Distance matrix $\mathcal{D}$, demand vector $q$, vehicle capacity $Q$, required number of vehicles $K$, maximum iterations $T$, scaling coefficient $\alpha$, route set $R$, edge penalty $\mathcal{P}$ and number of customers $n$
  \Ensure Best route set $R^*$

  \State $R \gets \textbf{Nearest\_neighbor\_2End}(\mathcal{D},q,Q,K)$
  \State $R \gets \textbf{Local\_search}_{\text{2-opt+relocate}}(R,\mathcal{D},q,Q)$
  \State $R^* \gets R$, $cost^* \gets \textbf{Cost}(\mathcal{D},R)$
  \State $\mathcal{P} \gets \mathbf{0}$
  \State $\lambda \gets \alpha \cdot \dfrac{cost^*}{n}$

  \For{$k = 1$ \textbf{to} $T$}
      \State $u_{\max} \gets -\infty$
      \State $(i^\star,j^\star) \gets \emptyset$
      \ForAll{$(i,j)$ in $R$}
        \State $u \gets \dfrac{d_{ij}}{1+\mathcal{P}_{ij}}$
        \If{$u > u_{\max}$}
          \State $u_{\max} \gets u$
          \State $(i^\star,j^\star) \gets (i,j)$
        \EndIf
      \EndFor
      
      \State $\mathcal{P}[i^\star,j^\star]\!+\!=1$, $\mathcal{P}[j^\star,i^\star]\!+\!=1$
      \State $\mathcal{D}^{\prime} \gets \mathcal{D}+\lambda\cdot\mathcal{P}$
      \State $R \gets \textbf{Local\_search}_{\text{2-opt+relocate+swap}}(R,\mathcal{D}^{\prime},q,Q)$

    \State $cost \gets \textbf{Cost}(\mathcal{D},R)$
    \If{$cost < cost^*$}
      \State $R^* \gets R$, $cost^* \gets cost$
    \EndIf
  \EndFor

  \State \Return $R^*$
\end{algorithmic}
\end{algorithm}

\subsection{LLM-Driven GLS Procedure for CVRP}
\label{apx:llmgls-cvrp}

\begin{algorithm}[H]
\normalsize
\caption{LLM-driven Guided Local Search for CVRP}
\label{alg:LLMgls_cvrp}
\begin{algorithmic}[1]
  \Require Distance matrix $\mathcal{D}$, customer demands $q$, vehicle capacity $Q$, required vehicle number $K$, maximum iterations $T$, maximum perturbation moves per iteration $p$, current route set $R$ and edge penalty $\mathcal{P}$
  \Ensure Best route set $R^*$

  \State $R \gets \textbf{LLM\_select\_next\_node}(\mathcal{D},q,Q,K)$ 
  \hfill \texttt{// LLM feasible solution initialization}
  \State $R \gets \textbf{Local\_search}_{\text{2-opt+relocate}}(R,\mathcal{D},q,Q)$
  \State $R^* \gets R$, $cost^* \gets \textbf{Cost}(\mathcal{D},R)$

  \State $\mathcal{P} \gets \mathbf{0}$
  \For{$k = 1$ \textbf{to} $T$}
    \For{$h = 1$ \textbf{to} $p$}
      \State $\mathcal{D}^{\prime} \gets \textbf{LLM\_update\_edge\_distance}(\mathcal{D},R,\mathcal{P},q,Q)$
      \hfill \texttt{// LLM Penalty construction}

      \State $\Delta \gets \max(\mathcal{D}^{\prime}-\mathcal{D},0)$
      \State $\mathcal{E} \gets \text{Top5Edges}(\Delta)$

      \ForAll{edge $(i,j)$ in $\mathcal{E}$}
        \State $\mathcal{P}[i,j]\!+\!=1$, $\mathcal{P}[j,i]\!+\!=1$
        \State $\delta,R' \gets 
        \textbf{Two-Opt}(R,i,j,\mathcal{D}^{\prime})$
        \If{$\delta<0$}
            \State $R \gets R'$, $cost \gets \textbf{Cost}(\mathcal{D},R)$
          \EndIf

          \State $\delta,R' \gets \textbf{Relocate}(R,i,j,\mathcal{D}^{\prime},q,Q)$
          \If{$\delta<0$}
            \State $R \gets R'$, $cost \gets \textbf{Cost}(\mathcal{D},R)$
          \EndIf

          \State $\delta,R' \gets
          \textbf{Swap}(R,i,j,\mathcal{D}^{\prime},q,Q)$
          \If{$\delta<0$}
            \State $R \gets R'$, $cost \gets \textbf{Cost}(\mathcal{D},R)$
          \EndIf
        \EndFor
      \EndFor

    \State $R \gets \textbf{Local\_search}_{\text{2-opt+relocate}}(R,\mathcal{D},q,Q)$
    \State $cost \gets \textbf{Cost}(\mathcal{D},R)$
    \If{$cost < cost^*$}
      \State $R^* \gets R$, $cost^* \gets cost$
    \EndIf

    \If{$k \bmod 50 = 0$}
      \State $R,\,cost \gets R^*,\,cost^*$
    \EndIf

  \EndFor

  \State \Return $R^*$
\end{algorithmic}
\end{algorithm}

\section{Best Elite Heuristics Automatically Generated by LLM}
\label{app:best-elite-heuristics}
\subsection{Best LLM-Generated Elite Heuristic for TSP}
\label{apx:tsp-elite}

\begin{tcolorbox}[
    title={The Best Elite Individual},
    width=\textwidth,
    breakable
]
\textbf{Shared Blueprint}\par\noindent
\texttt{Distance-driven stochastic guidance: construct the initial route by balancing immediate travel cost with randomized continuation potential, then diversify subsequent search through distance-scaled, usage-adaptive randomized penalties on current-route edges.}

\vspace{10pt}
\hrule
\vspace{10pt}

\textbf{Thought1}\par\noindent
\texttt{Implement a distance-driven stochastic greedy rule to select the next node: evaluate each candidate mainly by its immediate distance from the current node, while adding a random-weighted nearest-continuation distance to perturb the greedy choice and encourage exploration.}

\vspace{10pt}
\hrule
\vspace{10pt}

\textbf{Code1}\par\noindent
\begin{Verbatim}[
    breaklines,
    breakanywhere=true,
    commandchars=\\\{\}
]
\textcolor{red}{import} numpy \textcolor{red}{as} np

\textcolor{red}{def select_next_node}(current_node, destination_node, unvisited_nodes, distance_matrix):
    scores = []
    for next_node in unvisited_nodes:
        distance_to_next = distance_matrix[current_node][next_node]
        min_distance_to_unvisited = np.min([
            distance_matrix[next_node][unvisited]
            for unvisited in unvisited_nodes
            if unvisited != next_node
        ])
        score = distance_to_next + min_distance_to_unvisited * np.random.rand()
        scores.append(score)

    next_node = unvisited_nodes[np.argmin(scores)]
    \textcolor{red}{return} next_node
\end{Verbatim}

\vspace{10pt}
\hrule
\vspace{10pt}

\textbf{Thought2}\par\noindent
\texttt{Implement the distance-driven stochastic perturbation principle to update the search landscape: penalize edges in the local-optimal tour proportionally to their current distance, amplify the penalty for less-used edges, and add random noise to diversify subsequent search moves.}

\vspace{10pt}
\hrule
\vspace{10pt}

\textbf{Code2}\par\noindent
\begin{Verbatim}[
    breaklines,
    breakanywhere=true,
    commandchars=\\\{\}
]
\textcolor{red}{import} numpy \textcolor{red}{as} np

\textcolor{red}{def update_edge_distance}(edge_distance, local_opt_tour, edge_n_used):
    updated_edge_distance = np.copy(edge_distance)

    for i in range(len(local_opt_tour)):
        start_node = local_opt_tour[i]
        end_node = local_opt_tour[(i + 1) % len(local_opt_tour)]

        usage_count = edge_n_used[start_node][end_node] + 1e-8
        distance = updated_edge_distance[start_node][end_node]
        penalty = distance * (1 + (1 / usage_count)) * (1 + np.random.rand())

        updated_edge_distance[start_node][end_node] += penalty
        updated_edge_distance[end_node][start_node] += penalty

    \textcolor{red}{return} updated_edge_distance
\end{Verbatim}

\vspace{10pt}
\hrule
\vspace{10pt}

\textbf{Fitness}\par\noindent
\texttt{0.000}

\end{tcolorbox}

\medskip
\noindent\textbf{Mathematical interpretation of the generated functions.}\par\noindent
In \texttt{LLM\_select\_next\_node}, let \(i\) be the current node and \(U\) the set of unvisited nodes. Each candidate \(j \in U\) is scored by combining its immediate distance with a randomized nearest-continuation term,
\begin{equation}
s(j \mid i,U)
=
d_{ij}
+
\xi_j\cdot\min_{k \in U \setminus \{j\}} d_{jk},
\qquad
\xi_j \sim \mathcal{U}(0,1).
\end{equation}
The next node is selected as \(j^*=\arg\min_{j\in U}s(j\mid i,U)\).

In \texttt{LLM\_update\_edge\_distance}, for each edge \((i,j)\) in the current locally optimal tour, a distance-scaled, usage-adaptive, and randomized penalty is computed as
\begin{equation}
\Delta d_{ij}
=
d_{ij} \cdot
\left(1+\frac{1}{p_{ij}+\varepsilon}\right) \cdot
(1+\eta_{ij}),
\qquad
\eta_{ij} \sim \mathcal{U}(0,1),
\end{equation}
where \(d_{ij}\) denotes the current edge distance in the updated distance matrix, \(p_{ij}\) is the historical usage count of edge \((i,j)\), and \(\varepsilon\) avoids division by zero. The penalized distance matrix \(\mathcal{D}^{\prime}\) is then symmetrically modified by
\begin{equation}
d'_{ij}
=
\max\left\{
0,\,
d_{ij}+\Delta d_{ij}
\right\},
\qquad
d'_{ji}=d'_{ij}.
\tag{30}
\end{equation}

\subsection{Best LLM-Generated Elite Heuristic for CVRP}
\label{apx:cvrp-elite}

\begin{tcolorbox}[
    title={The Best Elite Individual},
    width=\textwidth,
    breakable
]

\textbf{Shared Blueprint}\par\noindent
\texttt{Structure-aware penalty-augmented distance shaping: construct feasible routes by balancing immediate distance with spatial coherence and long-move avoidance, then guide perturbation through demand- and usage-aware adjustments to current-route edges.}

\vspace{10pt}
\hrule
\vspace{10pt}

\textbf{Thought1}\par\noindent
\texttt{Select the next feasible customer using a penalty-augmented distance score that combines immediate travel distance, spatial coherence among remaining feasible customers, and an additional penalty for unusually long moves.}

\vspace{10pt}
\hrule
\vspace{10pt}
\textbf{Code1}\par\noindent
\begin{Verbatim}[
    breaklines,
    breakanywhere=true,
    commandchars=\\\{\}
]
\textcolor{red}{import} numpy \textcolor{red}{as} np

\textcolor{red}{def select_next_node}(current_node, feasible_customers, remaining_customers, remaining_capacity, demands, distance_matrix):
    next_node = None
    best_score = float('inf')

    distances = distance_matrix[current_node, feasible_customers]

    for i, customer in enumerate(feasible_customers):
        if demands[customer] <= remaining_capacity:
            nearby_customers = [
                c for c in remaining_customers
                if c != customer and demands[c] <= remaining_capacity
            ]

            if len(nearby_customers) > 0:
                spatial_coherence = np.mean(
                    distance_matrix[customer, nearby_customers]
                )
            else:
                spatial_coherence = 0.0

            score = distances[i] + 0.15 * spatial_coherence

            if distances[i] > np.mean(distances):
                score *= 1.1

            if score < best_score:
                best_score = score
                next_node = customer

    \textcolor{red}{return} next_node
\end{Verbatim}

\vspace{10pt}
\hrule
\vspace{10pt}
\textbf{Thought2}\par\noindent
\texttt{Update the edge distance matrix using a penalty-augmented distance adjustment, where each local-optimal-route edge is scaled by its distance, route demand level, and historical usage, with clipping and symmetry preserving stability.}

\vspace{10pt}
\hrule
\vspace{10pt}
\textbf{Code2}\par\noindent
\begin{Verbatim}[
    breaklines,
    breakanywhere=true,
    commandchars=\\\{\}
]
\textcolor{red}{import} numpy \textcolor{red}{as} np

\textcolor{red}{def update_edge_distance}(edge_distance, local_opt_routes, edge_n_used, demands, vehicle_capacity):
    n = edge_distance.shape[0]
    updated_edge_distance = np.copy(edge_distance)

    for route in local_opt_routes:
        customers = route[route >= 0]

        if customers.size > 0:
            total_demand = demands[customers].sum()
            demand_normalized = total_demand / max(vehicle_capacity, 1)

            for i in range(len(customers) - 1):
                from_customer = customers[i]
                to_customer = customers[i + 1]

                edge_use = (
                    edge_n_used[from_customer][to_customer] + 1e-6
                )

                adjustment = (
                    edge_distance[from_customer][to_customer]
                    * (1 - edge_use / 20)
                    * (1 + 0.5 * demand_normalized)
                )

                updated_edge_distance[from_customer][to_customer] = max(
                    0,
                    updated_edge_distance[from_customer][to_customer]
                    + adjustment
                )

                updated_edge_distance[to_customer][from_customer] = (
                    updated_edge_distance[from_customer][to_customer]
                )

    np.fill_diagonal(updated_edge_distance, 0)

    \textcolor{red}{return} updated_edge_distance
\end{Verbatim}

\vspace{10pt}
\hrule
\vspace{10pt}
\textbf{Fitness}\par\noindent
\texttt{2.046}

\end{tcolorbox}

\medskip
\noindent\textbf{Mathematical interpretation of the generated functions.}\par\noindent
In \texttt{LLM\_select\_next\_node}, let \(i\) be the current node and \(\mathcal{C}\) the set of capacity-feasible candidate customers. For each \(j\in\mathcal{C}\), the compactness-oriented score is defined as
\begin{equation}
s_j=
\gamma_j
\left(
d_{ij}
+
0.15\,c_j
\right),
\qquad
c_j=
\begin{cases}
\frac{1}{|\mathcal{N}_j|}\sum_{k\in\mathcal{N}_j}d_{jk}, & |\mathcal{N}_j|>0,\\
0, & |\mathcal{N}_j|=0,
\end{cases}
\end{equation}
where \(d_{ij}\) is the immediate travel distance, \(c_j\) is the spatial-coherence penalty computed over \(\mathcal{N}_j=\mathcal{C}\setminus\{j\}\), and \(\gamma_j\) is the long-move penalty factor:
\begin{equation}
\gamma_j=
\begin{cases}
1.1, & d_{ij}>\frac{1}{|\mathcal{C}|}\sum_{\ell\in\mathcal{C}}d_{i\ell},\\
1, & \text{otherwise}.
\end{cases}
\end{equation}
The next customer is selected as \(j^*=\arg\min_{j\in\mathcal{C}}s_j\). 

In \texttt{LLM\_update\_edge\_distance}, for each local-optimal route \(R\), let \(C(R)\) denote its valid customer sequence and define the normalized route demand as
\begin{equation}
\rho_R=\frac{\sum_{v\in C(R)}q_v}{\max(Q,1)}.
\end{equation}
For each edge \((i,j)\) in \(C(R)\), a structure-aware distance adjustment reshapes the local-route edge cost by scaling the original distance with route demand and historical edge usage:
\begin{equation}
\Delta d_{ij}^{R}
=
d_{ij}
\left(1-\frac{p_{ij}+\varepsilon}{20}\right)
\left(1+0.5\rho_R\right),
\end{equation}
where \(p_{ij}\) is the historical usage count of edge \((i,j)\), and \(\varepsilon\) is a small smoothing constant. The penalized distance matrix \(\mathcal{D}^{\prime}\) is then symmetrically modified by
\begin{equation}
d_{ij}^{\prime R}
=
\max\left\{
0,\,
d_{ij}^{R}+\Delta d_{ij}^{R}
\right\},
\qquad
d_{ji}^{\prime R}=d_{ij}^{\prime R}.
\tag{35}
\end{equation}

\subsection{Best Penalty-Construction Individuals Generated by EoH and ReEvo}
\label{apx:eoh-reevo-best-penalty}

\subsubsection{EoH on TSP from Publication \citep{9e7eceac6e31432aafced5e02a49de16}}
\label{apx:eoh-tsp-best-penalty}

\begin{tcolorbox}[
    title={The Best EoH Penalty-Construction Individual for TSP},
    width=\textwidth,
    breakable,
    fontupper=\small,
    before skip=3pt,
    after skip=3pt
]

\textbf{Thought2}\par\noindent
\texttt{Update the edge distances in the edge distance matrix by incorporating a pheromone-like effect, where the update is determined by edge count, distance, and usage, with the addition of a decay factor to avoid stagnation and promote exploration.}

\vspace{5pt}
\hrule
\vspace{5pt}

\textbf{Code2}\par\noindent
\begin{Verbatim}[
    breaklines,
    breakanywhere=true,
    commandchars=\\\{\}
]
\textcolor{red}{import} numpy \textcolor{red}{as} np

\textcolor{red}{def update_edge_distance}(edge_distance, local_opt_tour, edge_n_used):
    updated_edge_distance = np.copy(edge_distance)
    edge_count = np.zeros_like(edge_distance)
    for i in range(len(local_opt_tour) - 1):
        start = local_opt_tour[i]
        end = local_opt_tour[i + 1]
        edge_count[start][end] += 1
        edge_count[end][start] += 1

    edge_n_used_max = np.max(edge_n_used) + 1e-8
    decay_factor = 0.1
    mean_distance = np.mean(edge_distance) + 1e-8

    for i in range(edge_distance.shape[0]):
        for j in range(edge_distance.shape[1]):
            if edge_count[i][j] > 0:
                noise_factor = (
                    np.random.uniform(0.7, 1.3) / edge_count[i][j]
                    + edge_distance[i][j] / mean_distance
                    - (0.3 / edge_n_used_max) * edge_n_used[i][j]
                )
                updated_edge_distance[i][j] += (
                    noise_factor * (1 + edge_count[i][j])
                    - decay_factor * updated_edge_distance[i][j]
                )

    \textcolor{red}{return} updated_edge_distance
\end{Verbatim}

\end{tcolorbox}
\subsubsection{EoH on CVRP}
\label{apx:eoh-cvrp-best-penalty}

\begin{tcolorbox}[
    title={The Best EoH Penalty-Construction Individual for CVRP},
    width=\textwidth,
    breakable,
    fontupper=\small,
    before skip=3pt,
    after skip=3pt
]

\textbf{Thought2}\par\noindent
\texttt{Balancing capacity awareness and search diversification by revising traversed edge costs according to customer demand, distance, and historical usage.}

\vspace{5pt}
\hrule
\vspace{5pt}

\textbf{Code2}\par\noindent
\begin{Verbatim}[
    breaklines,
    breakanywhere=true,
    commandchars=\\\{\}
]
\textcolor{red}{def update_edge_distance}(edge_distance, local_opt_routes, edge_n_used, demands, vehicle_capacity):
    updated_edge_distance = edge_distance.copy()
    eps = 1e-8

    for route in local_opt_routes:
        if route[0] == -1:
            continue

        route_length = len(route)

        for i in range(route_length - 1):
            if route[i] == -1 or route[i + 1] == -1:
                continue

            u = route[i]
            v = route[i + 1]
            current_load = demands[u]

            distance_penalty = edge_distance[u][v] + eps
            usage_penalty = edge_n_used[u][v] + eps

            if vehicle_capacity > 0:
                updating_value = (
                    current_load / vehicle_capacity
                ) * (distance_penalty + usage_penalty)
            else:
                updating_value = distance_penalty + usage_penalty

            updated_edge_distance[u][v] += updating_value
            updated_edge_distance[v][u] += updating_value
            edge_n_used[u][v] += 1
            edge_n_used[v][u] += 1

    np.fill_diagonal(updated_edge_distance, 0)
    \textcolor{red}{return} updated_edge_distance
\end{Verbatim}

\end{tcolorbox}
\subsubsection{ReEvo on TSP}
\label{apx:reevo-tsp-best-penalty}

\begin{tcolorbox}[
    title={The Best ReEvo Penalty-Construction Individual for TSP},
    width=\textwidth,
    breakable,
    fontupper=\normalsize,
    before skip=3pt,
    after skip=3pt
]

\textbf{Thought2}\par\noindent
\texttt{Adaptively reshaping edge distances through usage-aware nonlinear penalties and exploration noise while preserving symmetry, non-negativity, and the original overall scale.}

\vspace{5pt}
\hrule
\vspace{5pt}

\textbf{Code2}\par\noindent
\begin{Verbatim}[
    breaklines,
    breakanywhere=true,
    commandchars=\\\{\}
]
\textcolor{red}{import} numpy \textcolor{red}{as} np

\textcolor{red}{def update_edge_distance}(edge_distance: np.ndarray,
                         local_opt_tour: np.ndarray,
                         edge_n_used: np.ndarray) -> np.ndarray:
    import numpy as np

    updated = edge_distance.copy()
    n = len(local_opt_tour)
    max_penalty = np.max(edge_distance) + 1

    for i in range(n):
        u = int(local_opt_tour[i])
        v = int(local_opt_tour[(i + 1) % n])

        usage_count = edge_n_used[u, v]
        distance = edge_distance[u, v]

        penalty = ( 0.1 * distance * (1 + 0.05 * usage_count ** 2)
            if usage_count > 0
            else 0.1 * distance + max_penalty
        )

        updated[u, v] = max(0, distance + penalty)
        updated[v, u] = updated[u, v]

        if usage_count > 0:
            decay_factor = 0.02 * np.log(usage_count + 1) ** 2
            updated[u, v] += decay_factor
            updated[v, u] += decay_factor

        if usage_count == 0:
            underused_penalty = 0.2 * (1 / (1 + np.sqrt(distance)))
            updated[u, v] += underused_penalty
            updated[v, u] += underused_penalty

    exploration_need = np.mean(edge_n_used)
    randomness_scale = max(0.1, 0.5 / (1 + exploration_need))
    randomness = np.random.rand(n, n) * randomness_scale
    updated += randomness

    updated = np.maximum(updated, 0)

    scale_factor = np.sum(updated) + 1e-10
    updated = updated / scale_factor * np.sum(edge_distance)

    np.fill_diagonal(updated, 0)

    \textcolor{red}{return} updated
\end{Verbatim}

\end{tcolorbox}
\subsubsection{ReEvo on CVRP}
\label{apx:reevo-cvrp-best-penalty}

\begin{tcolorbox}[
    title={The Best ReEvo Penalty-Construction Individual for CVRP},
    width=\textwidth,
    breakable,
    fontupper=\small,
    before skip=3pt,
    after skip=3pt
]

\textbf{Thought2}\par\noindent
\texttt{Diversifying routes by nonlinearly increasing the costs of unique route edges according to repeated usage, endpoint demand, and total route load.}

\vspace{5pt}
\hrule
\vspace{5pt}

\textbf{Code2}\par\noindent
\begin{Verbatim}[
    breaklines,
    breakanywhere=true,
    commandchars=\\\{\}
]
\textcolor{red}{def update_edge_distance}(edge_distance: np.ndarray,
                         local_opt_routes: np.ndarray,
                         edge_n_used: np.ndarray,
                         demands: np.ndarray,
                         vehicle_capacity: int) -> np.ndarray:
    updated_edge_distance = np.copy(edge_distance)
    np.fill_diagonal(updated_edge_distance, 0.0)

    cap = max(vehicle_capacity, 1.0)
    eps = 1e-8

    route_load_factor = lambda demand: demand / (cap + eps)

    if local_opt_routes.size != 0:
        for route in local_opt_routes:
            customers = route[route >= 0]

            if customers.size > 0:
                route_demand = np.sum(demands[customers])
                route_ratio = route_load_factor(route_demand)

                route_edges = ( [(0, int(customers[0]))]
                    + [ (int(customers[i]), int(customers[i + 1]))
                        for i in range(len(customers) - 1)
                    ]
                    + [(int(customers[-1]), 0)]
                )

                unique_edges = set(route_edges)

                for u, v in unique_edges:
                    edge_usage = edge_n_used[u, v] + 1
                    combined_demand = ( demands[u] + demands[v] ) / (cap + eps)

                    penalty = ( 1.0
                        + edge_usage ** 1.5 * 0.12
                        + combined_demand * 0.2
                        + route_ratio * 0.4
                    )

                    adaptive_factor = 1.0
                    updated_penalty = penalty * adaptive_factor * 0.18
                    updated_edge_distance[u, v] += updated_penalty
                    updated_edge_distance[v, u] += updated_penalty

                    edge_n_used[u, v] += 1
                    edge_n_used[v, u] += 1

    updated_edge_distance = ( updated_edge_distance + updated_edge_distance.T ) / 2
    updated_edge_distance = np.clip(updated_edge_distance, 0, None)
    np.fill_diagonal(updated_edge_distance, 0.0)

    \textcolor{red}{return} updated_edge_distance
\end{Verbatim}

\end{tcolorbox}

\subsection{Structural Focus and Diversity Indicators}
\label{apx:structural-indicators}
\begingroup
\setlength{\abovedisplayskip}{16pt}
\setlength{\belowdisplayskip}{16pt}
\setlength{\abovedisplayshortskip}{10pt}
\setlength{\belowdisplayshortskip}{10pt}
\setlength{\parskip}{3pt}

Consider an evolved individual
\[
a_j=\bigl(Z_j,F_j^{s},F_j^{u}\bigr),
\]
where \(Z_j\) is the shared design blueprint,
\(F_j^{s}\) is the solution-initialization function, and
\(F_j^{u}\) is the penalty-construction function.

For a problem instance \(x\), let \(\mathcal{D}=(d_{ij})\) denote the original
distance matrix and let \(\theta_x\) collect the problem-specific
information. For TSP, \(\theta_x\) is omitted; for CVRP,
\(\theta_x\) contains the customer demands and vehicle capacity.
The initialization function constructs the initial solution
\begin{equation}
S_{0}(a_j;x)
=
F_j^{s}\!\left(\mathcal{D};\theta_x\right).
\label{eq:structural-initial-solution}
\end{equation}

Starting from \(S_{0}(a_j;x)\), local search and guided perturbation
produce a sequence of search states. Define
\(S_h(a_j;x)\) as the current search state at the \(h\)-th call to
\(F_j^{u}\), and define its route-edge collection as
\begin{equation}
E_h(a_j;x)
=
E\!\left(S_h(a_j;x)\right).
\label{eq:structural-edge-set}
\end{equation}

Let \(\mathcal{P}_h\) denote the edge-usage count matrix at the \(h\)-th update call. Applying \(F_j^{u}\) to the current search state defines the guided  distance matrix
\begin{equation}
\mathcal{D}_h^{\prime}
=
F_j^{u}\!\left(
\mathcal{D},
S_h(a_j;x),
\mathcal{P}_h;
\theta_x
\right),
\label{eq:structural-guided-matrix}
\end{equation}
where \(\mathcal{D}_h^{\prime}=(d_{ij,h}^{\prime})\).

Define the positive gap on edge \((i,j)\) at update call \(h\) as
\begin{equation}
\Delta_{ij,h}^{+}
=
\max\!\left\{
d_{ij,h}^{\prime}-d_{ij},
0
\right\}.
\label{eq:structural-positive-gap}
\end{equation}
Because of the symmetric routing problems considered in this study, define
the symmetric positive gap as
\begin{equation}
\overline{\Delta}_{ij,h}^{+}
=
\frac{1}{2}
\left(
\Delta_{ij,h}^{+}
+
\Delta_{ji,h}^{+}
\right).
\label{eq:structural-symmetric-gap}
\end{equation}

Define the average positive gap on the current search-state edges as
\begin{equation}
\mu_{\mathrm{path},h}(a_j;x)
=
\frac{1}{|E_h(a_j;x)|}
\sum_{(i,j)\in E_h(a_j;x)}
\overline{\Delta}_{ij,h}^{+}.
\label{eq:structural-path-mean}
\end{equation}

Let \(n_x\) denote the number of nodes in instance \(x\), and define the candidate undirected-edge set as
\[
\mathcal{E}_x
=
\left\{
(i,j):0\leq i<j<n_x
\right\}.
\]
The global average positive gap is then defined as
\begin{equation}
\mu_{\mathrm{all},h}(a_j;x)
=
\frac{1}{|\mathcal{E}_x|}
\sum_{(i,j)\in\mathcal{E}_x}
\overline{\Delta}_{ij,h}^{+}.
\label{eq:structural-global-mean}
\end{equation}

Define the Penalty Focus Ratio as
\begin{equation}
\mathrm{PFR}_{h}(a_j;x)
=
\frac{
\mu_{\mathrm{path},h}(a_j;x)
}{
\mu_{\mathrm{all},h}(a_j;x)+\varepsilon
},
\label{eq:structural-pfr}
\end{equation}
where \(\varepsilon>0\) is a small constant that prevents division
by zero. The normalized Focus indicator is defined as
\begin{equation}
\mathrm{Focus}_{h}(a_j;x)
=
\frac{
\max\!\left\{
0,\mathrm{PFR}_{h}(a_j;x)-1
\right\}
}{
1+
\max\!\left\{
0,\mathrm{PFR}_{h}(a_j;x)-1
\right\}
}.
\label{eq:structural-focus}
\end{equation}

To quantify edge-level differentiation within the current search
state, define the standard deviation of the positive gaps as
\begin{equation}
\sigma_{\mathrm{path},h}(a_j;x)
=
\sqrt{
\frac{1}{|E_h(a_j;x)|}
\sum_{(i,j)\in E_h(a_j;x)}
\left(
\overline{\Delta}_{ij,h}^{+}
-
\mu_{\mathrm{path},h}(a_j;x)
\right)^2
}.
\label{eq:structural-path-std}
\end{equation}

Define the Penalty Focus Diversity ratio and its normalized form as
\begin{equation}
\mathrm{PFD}_{h}(a_j;x)
=
\frac{
\sigma_{\mathrm{path},h}(a_j;x)
}{
\mu_{\mathrm{path},h}(a_j;x)+\varepsilon
},
\qquad
\mathrm{Diversity}_{h}(a_j;x)
=
\frac{
\mathrm{PFD}_{h}(a_j;x)
}{
1+\mathrm{PFD}_{h}(a_j;x)
}.
\label{eq:structural-diversity}
\end{equation}

\endgroup
\clearpage
\section{Performance Comparison on Public Benchmark Instances}
\label{app:public-benchmark-results}
\subsection{Results on TSPLIB Instances}
\label{apx:tsplib-results}

\begin{table}[H]
  \centering
  \caption{Performance comparison on TSPLIB instances}
  \label{tab:tsplib_gap}
 \footnotesize
  \begin{threeparttable}
    \setlength{\tabcolsep}{2.5pt}
    \resizebox{\linewidth}{!}{%
    \begin{tabular}{lcccccccccccccc}
      \toprule
      Instance
      & LS & GLS & FA-GLS & KGLS-r & KGLS-c & EBGLS
      & AM & POMO & LEHD & GNNGLS & NeuralGLS
      & EoH & ReEvo & LLM-HCJG \\
      \midrule
      eil51      & 1.214 & 1.643 & \textbf{0.000} & 0.700 & 0.700 & 0.700 & 1.628 & 0.829 & 1.640 & \textbf{0.000} & \textbf{0.000} & \textbf{0.000} & \textbf{0.000} & \textbf{0.000} \\
      berlin52   & 3.894 & \textbf{0.000} & \textbf{0.000} & 0.031 & 0.031 & 0.031 & 4.169 & 0.035 & 0.031 & 0.142 & \textbf{0.000} & 3.911 & \textbf{0.000} & \textbf{0.000} \\
      st70       & 1.945 & 0.741 & \textbf{0.000} & 0.313 & 0.313 & 0.325 & 1.737 & 0.313 & 0.325 & 0.764 & \textbf{0.000} & 1.185 & \textbf{0.000} & \textbf{0.000} \\
      eil76      & 4.146 & 0.558 & \textbf{0.000} & 1.184 & 1.184 & 1.184 & 1.992 & 1.184 & 2.544 & 0.163 & \textbf{0.000} & \textbf{0.000} & \textbf{0.000} & \textbf{0.000} \\
      pr76       & 1.506 & \textbf{0.000} & \textbf{0.000} & \textbf{0.000} & \textbf{0.000} & 0.097 & 0.816 & \textbf{0.000} & 0.219 & 0.039 & 0.823 & 1.505 & \textbf{0.000} & \textbf{0.000} \\
      rat99      & 6.459 & 4.211 & \textbf{0.000} & 0.681 & 0.826 & 6.459 & 2.645 & 2.392 & 1.099 & 0.550 & 0.718 & 7.597 & \textbf{0.000} & \textbf{0.000} \\
      kroA100    & 2.998 & 0.108 & \textbf{0.000} & 0.119 & 0.016 & 2.998 & 4.017 & 0.413 & 0.119 & 0.728 & 0.029 & 2.979 & \textbf{0.000} & \textbf{0.000} \\
      kroB100    & 0.577 & 0.262 & 0.262 & 0.254 & 0.254 & 0.577 & 5.142 & 0.323 & 0.256 & 0.147 & 0.882 & 0.578 & \textbf{0.000} & \textbf{0.000} \\
      kroC100    & 5.582 & 2.024 & \textbf{0.000} & 0.344 & 0.008 & 0.008 & 0.972 & 0.183 & 0.325 & 1.571 & 1.771 & 5.591 & \textbf{0.000} & \textbf{0.000} \\
      kroD100    & 6.641 & 0.545 & \textbf{0.000} & 0.001 & 0.001 & 0.001 & 2.717 & 0.842 & 0.383 & 0.572 & \textbf{0.000} & 6.631 & \textbf{0.000} & \textbf{0.000} \\
      kroE100    & 4.641 & 0.480 & 0.172 & 0.174 & 0.194 & 0.288 & 1.470 & 0.450 & 0.426 & 1.216 & 1.053 & 4.649 & \textbf{0.000} & \textbf{0.000} \\
      rd100      & 9.314 & 0.013 & \textbf{0.000} & 0.005 & 0.005 & 0.202 & 3.407 & 0.005 & 0.005 & 0.459 & \textbf{0.000} & 9.317 & \textbf{0.000} & \textbf{0.000} \\
      eil101     & 5.180 & 0.477 & \textbf{0.000} & 2.113 & 2.514 & 2.117 & 2.994 & 1.844 & 2.309 & 0.201 & 0.362 & \textbf{0.000} & \textbf{0.000} & \textbf{0.000} \\
      lin105     & 2.606 & 0.730 & \textbf{0.000} & 0.028 & 0.028 & 0.313 & 1.739 & 0.521 & 0.343 & 0.606 & 0.647 & 2.573 & \textbf{0.000} & \textbf{0.000} \\
      pr107      & 0.614 & 0.305 & \textbf{0.000} & \textbf{0.000} & \textbf{0.000} & 0.414 & 3.933 & 0.524 & 11.235 & 0.439 & 0.808 & 0.618 & \textbf{0.000} & \textbf{0.000} \\
      pr124      & 2.441 & 0.878 & \textbf{0.000} & 2.441 & 2.441 & 0.650 & 3.677 & 0.603 & 1.112 & 0.755 & 0.075 & 2.439 & \textbf{0.000} & \textbf{0.000} \\
      bier127    & 1.789 & 0.579 & 0.119 & 0.190 & 0.185 & 0.247 & 5.908 & 13.721 & 4.761 & 1.948 & 2.272 & 1.777 & 0.037 & \textbf{0.026} \\
      ch130      & 7.613 & 0.295 & 0.393 & 0.012 & 0.012 & 0.068 & 3.182 & 0.157 & 0.554 & 3.519 & 1.188 & 6.579 & \textbf{0.000} & \textbf{0.000} \\
      pr136      & 6.304 & 1.898 & 0.234 & 0.155 & 0.092 & 0.152 & 5.064 & 0.927 & 0.449 & 3.387 & 2.319 & 6.305 & \textbf{0.000} & \textbf{0.000} \\
      pr144      & 4.194 & 0.386 & \textbf{0.000} & \textbf{0.000} & \textbf{0.000} & \textbf{0.000} & 7.641 & 0.531 & 0.194 & 3.581 & 0.743 & 4.196 & \textbf{0.000} & \textbf{0.000} \\
      ch150      & 1.282 & 0.934 & \textbf{0.000} & 0.437 & 0.429 & 0.429 & 4.584 & 0.528 & 0.519 & 2.113 & 2.488 & 2.099 & \textbf{0.000} & \textbf{0.000} \\
      kroA150    & 7.101 & 3.273 & 0.004 & 0.099 & 0.099 & 1.458 & 3.784 & 0.696 & 1.402 & 2.984 & 0.773 & 7.099 & \textbf{0.000} & \textbf{0.000} \\
      kroB150    & 5.546 & 2.748 & 0.758 & 0.790 & 1.234 & 0.091 & 2.437 & 1.167 & 0.756 & 3.258 & 3.114 & 4.298 & 0.008 & \textbf{0.000} \\
      pr152      & 1.895 & 0.185 & 0.185 & 1.895 & 1.895 & 0.189 & 7.494 & 1.054 & 12.136 & 3.119 & \textbf{0.000} & 1.889 & \textbf{0.000} & \textbf{0.000} \\
      u159       & 5.952 & 1.457 & 0.580 & \textbf{0.000} & \textbf{0.000} & 0.150 & 7.551 & 0.951 & 1.132 & 1.020 & 0.904 & 5.784 & \textbf{0.000} & \textbf{0.000} \\
      rat195     & 1.562 & 1.722 & 0.387 & 1.562 & 1.106 & 0.640 & 6.893 & 8.150 & 1.418 & 1.666 & 0.478 & 1.894 & 0.430 & \textbf{0.215} \\
      d198       & 1.775 & 0.817 & 0.114 & 0.393 & 0.425 & 0.824 & 373.020 & 17.290 & 9.235 & 4.772 & 1.280 & 1.027 & 0.114 & \textbf{0.101} \\
      kroA200    & 0.907 & 0.872 & \textbf{0.000} & 0.548 & 0.459 & 0.567 & 7.106 & 1.577 & 0.644 & 2.029 & 0.861 & 0.916 & 0.177 & 0.089 \\
      kroB200    & 5.357 & 4.308 & 0.856 & 0.722 & 0.984 & 1.524 & 8.541 & 1.440 & 0.156 & 2.589 & 3.742 & 5.364 & 0.099 & \textbf{0.051} \\
      \midrule
      Average    & 3.829 & 1.119 & 0.140 & 0.524 & 0.532 & 0.783 & 16.768 & 2.022 & 1.922 & 1.529 & 0.942 & 3.407 & 0.030 & \textbf{0.017} \\
      Number     & 0 & 2 & 17 & 4 & 4 & 1 & 0 & 1 & 0 & 1 & 7 & 3 & 23 & \textbf{28} \\
      \bottomrule
    \end{tabular}
    }

    \begin{tablenotes}[flushleft]
      \footnotesize
      \item[] \textbf{Notes:}
      All entries are optimality gaps (\%). Boldface marks the
      smallest or tied-smallest gap on each instance.
    \end{tablenotes}
  \end{threeparttable}
\end{table}
\vspace{-0.15em}
\subsection{Results on CVRPLIB Instances}
\label{apx:cvrplib-results}

\begin{table}[H]
  \centering
  \caption{Performance comparison on CVRPLIB instances}
  \label{tab:cvrplib_gap}
  \footnotesize
  \renewcommand{\arraystretch}{0.82}
  \begin{threeparttable}
    \setlength{\tabcolsep}{5.5pt}
    \begin{tabular}{lccccc}
      \toprule
      Instance & GLS & FA-GLS & EoH & ReEvo & LLM-HCJG \\
      \midrule
      A-n32-k5   & \textbf{0.000} & \textbf{0.000} & \textbf{0.000} & \textbf{0.000} & \textbf{0.000} \\
      A-n60-k9   & 0.369 & 0.812 & 0.369 & 7.164 & \textbf{0.295} \\
      B-n50-k7   & 8.637 & 0.405 & \textbf{0.000} & 0.270 & \textbf{0.000} \\
      B-n78-k10  & 11.138 & 3.440 & 1.229 & 6.798 & \textbf{0.082} \\
      E-n76-k10  & 3.614 & 7.349 & 4.337 & 14.819 & \textbf{1.687} \\
      E-n101-k8  & 6.258 & 3.436 & 2.945 & 6.012 & \textbf{0.491} \\
      P-n55-k15  & 15.875 & 19.818 & 10.313 & 35.490 & \textbf{4.449} \\
      P-n101-k4  & 1.028 & 8.370 & 2.496 & 6.314 & \textbf{0.294} \\
      M-n151-k12 & 11.133 & 7.783 & 4.532 & 12.709 & \textbf{3.054} \\
      X-n162-k11 & 5.029 & 6.613 & 4.994 & 6.882 & \textbf{1.457} \\
      X-n167-k10 & 7.219 & 8.902 & 6.990 & 10.702 & \textbf{4.461} \\
      X-n190-k8  & 4.576 & 6.896 & 5.012 & 6.254 & \textbf{3.963} \\
       \midrule
      Average & 6.240 & 6.152 & 3.601 & 9.451 & \textbf{1.686} \\
      Number  & 1 & 1 & 2 & 1 & \textbf{12} \\
      \bottomrule
    \end{tabular}
  \end{threeparttable}
\end{table}
\vspace{-0.6em}
\clearpage
\section{Joint Component Co-Evolution Ablation}
\vspace{-0.4em}
\label{app:joint-coevolution-ablation}

\subsection{Cross-Component Combination Results on Synthetic TSP Instances}
\label{apx:ablation-synthetic-tsp}
\begin{table}[H]
  \centering
  \caption{Cross-component combination ablation results on synthetic TSP instances.}
  \label{tab:ablation_synthetic_tsp}
  \fontsize{10.8}{11.5}\selectfont
  \renewcommand{\arraystretch}{1.10}
  \begin{threeparttable}
    \setlength{\tabcolsep}{6pt}
    \begin{tabular}{lcccccc}
      \toprule
      \multirow{2}{*}{Combination} &
      \multicolumn{2}{c}{\textbf{TSP20}} &
      \multicolumn{2}{c}{\textbf{TSP50}} &
      \multicolumn{2}{c}{\textbf{TSP100}} \\
      \cmidrule(lr){2-3}
      \cmidrule(lr){4-5}
      \cmidrule(lr){6-7}
      & Gap (\%) & Time (s)
      & Gap (\%) & Time (s)
      & Gap (\%) & Time (s) \\
      \midrule
      \texttt{EoH-EoH}
      & \textbf{0.000} & 3.044
      & 0.001 & 7.449
      & 0.159 & 22.505 \\

      \texttt{HCJG-HCJG}
      & \textbf{0.000} & 1.757
      & \textbf{0.000} & 2.960
      & \textbf{0.070} & 5.527 \\

      \texttt{EoH-HCJG}
      & \textbf{0.000} & 1.934
      & \textbf{0.000} & 2.830
      & 0.082 & 5.059 \\

      \texttt{HCJG-EoH}
      & \textbf{0.000} & 5.209
      & \textbf{0.000} & 8.750
      & 0.152 & 26.142 \\
      \bottomrule
    \end{tabular}
  \end{threeparttable}
\end{table}
\vspace{-1.0em}
\subsection{Cross-Component Combination Results on TSPLIB Instances}
\label{apx:ablation-tsplib}

\begin{table}[H]
  \centering
  \caption{Cross-component combination ablation results on TSPLIB instances.}
  \label{tab:ablation_tsplib}
  \fontsize{10.2}{10.8}\selectfont
  \renewcommand{\arraystretch}{0.99}
  \begin{threeparttable}
    \setlength{\tabcolsep}{6pt}
    \begin{tabular}{lcccc}
      \toprule
      Instance &
      \texttt{HCJG-HCJG} &
      \texttt{EoH-EoH} &
      \texttt{EoH-HCJG} &
      \texttt{HCJG-EoH} \\
      \midrule
      eil51    & \textbf{0.000} & \textbf{0.000} & \textbf{0.000} & \textbf{0.000} \\
      berlin52 & \textbf{0.000} & 2.294 & \textbf{0.000} & \textbf{0.000} \\
      st70     & \textbf{0.000} & \textbf{0.000} & \textbf{0.000} & \textbf{0.000} \\
      eil76    & \textbf{0.000} & \textbf{0.000} & \textbf{0.000} & \textbf{0.000} \\
      pr76     & \textbf{0.000} & 5.504 & \textbf{0.000} & \textbf{0.000} \\
      rat99    & \textbf{0.000} & 3.468 & \textbf{0.000} & 1.156 \\
      kroA100  & \textbf{0.000} & 1.198 & \textbf{0.000} & \textbf{0.000} \\
      kroB100  & \textbf{0.000} & 5.090 & \textbf{0.000} & 0.357 \\
      kroC100  & \textbf{0.000} & 3.634 & \textbf{0.000} & 0.713 \\
      kroD100  & \textbf{0.000} & 6.490 & \textbf{0.000} & 1.371 \\
      kroE100  & \textbf{0.000} & 2.411 & \textbf{0.000} & 0.195 \\
      rd100    & \textbf{0.000} & 1.820 & \textbf{0.000} & 0.367 \\
      eil101   & \textbf{0.000} & \textbf{0.000} & \textbf{0.000} & \textbf{0.000} \\
      lin105   & \textbf{0.000} & 2.170 & \textbf{0.000} & \textbf{0.000} \\
      pr107    & \textbf{0.000} & 0.618 & \textbf{0.000} & \textbf{0.000} \\
      pr124    & \textbf{0.000} & 1.896 & \textbf{0.000} & 0.601 \\
      bier127  & \textbf{0.026} & 3.817 & 0.063 & 0.965 \\
      ch130    & \textbf{0.000} & 5.385 & \textbf{0.000} & 1.178 \\
      pr136    & \textbf{0.000} & 0.339 & \textbf{0.000} & 1.406 \\
      pr144    & \textbf{0.000} & 1.280 & \textbf{0.000} & \textbf{0.000} \\
      ch150    & \textbf{0.000} & 3.676 & \textbf{0.000} & 0.077 \\
      kroA150  & \textbf{0.000} & 6.466 & \textbf{0.000} & 1.218 \\
      kroB150  & \textbf{0.000} & 4.057 & \textbf{0.000} & 1.382 \\
      pr152    & \textbf{0.000} & 3.644 & \textbf{0.000} & 0.471 \\
      u159     & \textbf{0.000} & 0.530 & \textbf{0.000} & 0.960 \\
      rat195   & \textbf{0.215} & 3.874 & 0.517 & 1.378 \\
      d198     & \textbf{0.101} & 0.627 & 0.133 & 0.754 \\
      kroA200  & \textbf{0.089} & 2.874 & 0.099 & 1.199 \\
      kroB200  & \textbf{0.051} & 7.698 & 0.065 & 1.981 \\
      \midrule
      Mean gap & \textbf{0.017} & 2.788 & 0.030 & 0.611 \\
      \bottomrule
    \end{tabular}
  \end{threeparttable}
\end{table}
\begin{figure}[H]
  \centering
  \includegraphics[width=0.98\textwidth]
  {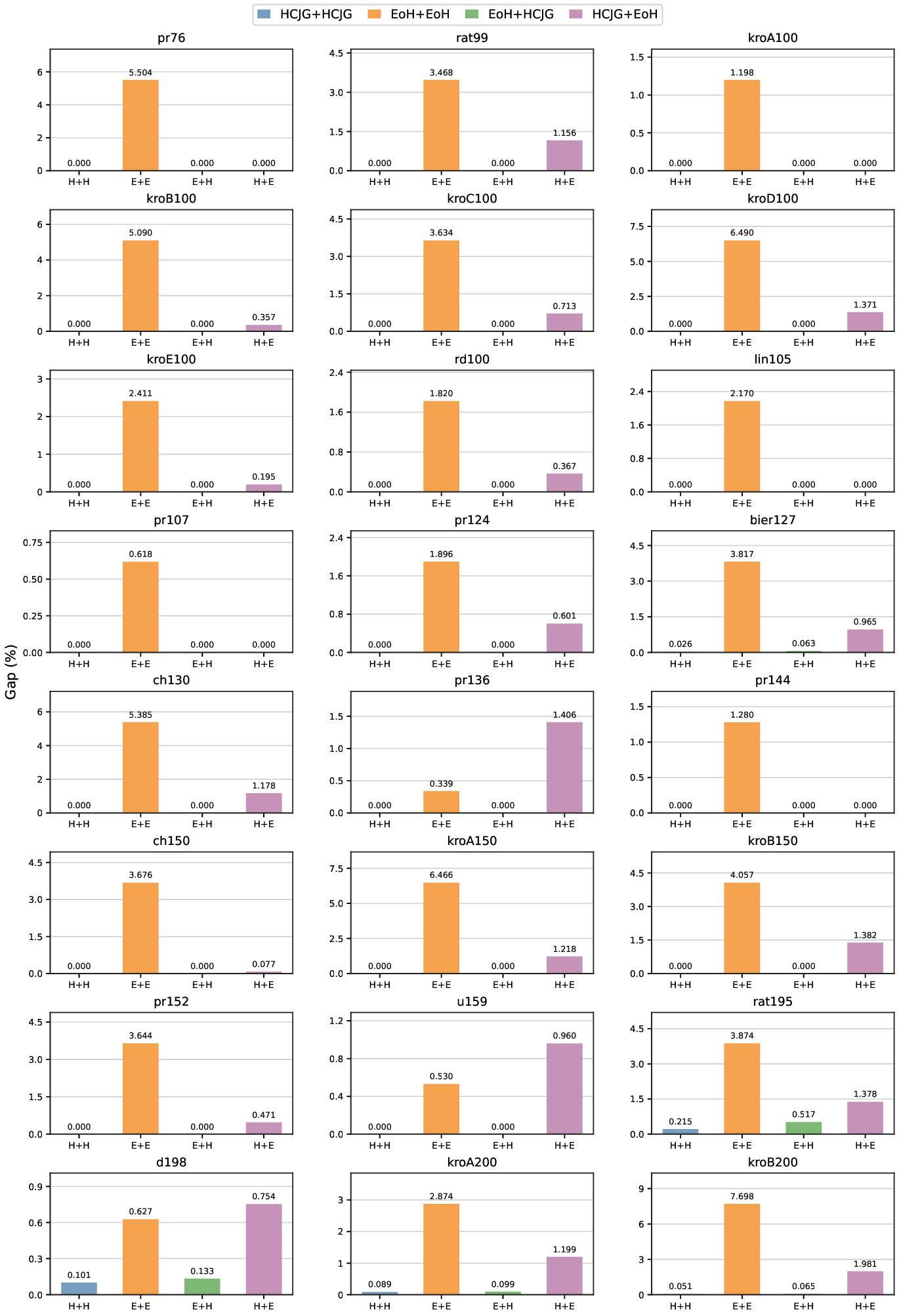}
  \caption{Visual comparison of optimality gaps on TSPLIB with non-identical results.}
  \label{fig:ablation-tsplib-nonidentical-gaps}
\end{figure}

\section{Statistical Significance and Uncertainty Analysis}
\label{app:statistical-significance}

Paired analyses are conducted using the per-instance optimality gaps reported in Appendices~\ref{apx:tsplib-results}, \ref{apx:cvrplib-results}, and~\ref{apx:ablation-tsplib}, with paired differences defined as the compared-method gap minus the HCJG gap, in percentage points. Uncertainty is quantified by percentile-bootstrap 95\% confidence intervals for the mean paired difference using 20,000 instance-level resamples with a fixed random seed, while statistical significance is assessed using two-sided Wilcoxon signed-rank tests at $\alpha=0.05$.

\subsection{Public TSPLIB and CVRPLIB Benchmarks}
\label{apx:stat-public}

\begin{table}[H]
\centering
\caption{Paired uncertainty and significance analysis on public benchmarks.}
\label{tab:stat-public}
\begin{threeparttable}
\begin{tabular}{llccc}
\toprule
Benchmark & Comparator & $n$ & Bootstrap 95\% CI (pp) & Wilcoxon $p$-value \\
\midrule
TSPLIB  & GLS     & 29 & [0.703, 1.547] & $<0.0001$ \\
TSPLIB  & FA-GLS  & 29 & [0.048, 0.213] & 0.0030 \\
TSPLIB  & EoH     & 29 & [2.476, 4.343] & $<0.0001$ \\
TSPLIB  & ReEvo   & 29 & [0.001, 0.031] & 0.0277 \\
\midrule
CVRPLIB & GLS     & 12 & [2.306, 6.939] & 0.0010 \\
CVRPLIB & FA-GLS  & 12 & [2.467, 6.936] & 0.0010 \\
CVRPLIB & EoH     & 12 & [1.052, 2.879] & 0.0020 \\
CVRPLIB & ReEvo   & 12 & [4.146, 12.759] & 0.0010 \\
\bottomrule
\end{tabular}
\begin{tablenotes}[flushleft]
\footnotesize
\item[] \textbf{Notes:} The interval entirely above zero favors  LLM-HCJG.
\end{tablenotes}
\end{threeparttable}
\end{table}

\subsection{Cross-Component Co-Evolution Ablation}
\label{apx:stat-ablation}

\begin{table}[H]
\centering
\caption{Paired uncertainty and significance analysis for TSPLIB ablation.}
\label{tab:stat-ablation}
\begin{threeparttable}
\begin{tabular}{lccc}
\toprule
Comparator & $n$ & Bootstrap 95\% CI (pp) & Wilcoxon $p$-value \\
\midrule
\texttt{EoH-EoH}   & 29 & [2.010, 3.575] & $<0.0001$ \\
\texttt{EoH-HCJG}  & 29 & [0.001, 0.036] & 0.0431 \\
\texttt{HCJG-EoH}  & 29 & [0.388, 0.801] & 0.0001 \\
\bottomrule
\end{tabular}
\begin{tablenotes}[flushleft]
\footnotesize
\item[] \textbf{Note:} The interval entirely above zero favors \texttt{HCJG+HCJG}.
\end{tablenotes}
\end{threeparttable}
\end{table}

\section{Proofs of Theoretical Results}
\label{app:theoretical-proofs}

\subsection{Proof of Theorem~\ref{thm:nonseparability}}
\label{app:proof-nonseparability}
\begin{proof}
Let
\begin{equation}
A^{s}=A\big(S(f^{s})\big),
\qquad
\widetilde{A}^{s}=A\big(S(\tilde f^{s})\big),
\end{equation}
and
\begin{equation}
B^{u}=B_{f^{u}}(D,P),
\qquad
\widetilde{B}^{u}=B_{\tilde f^{u}}(D,P).
\end{equation}
By \eqref{eq:route-supported-penalty},
\begin{align}
\mathcal{U}_{f^{u}}\big(D,S(f^{s}),P\big)&=D+A^{s}\odot B^{u},\\
\mathcal{U}_{f^{u}}\big(D,S(\tilde f^{s}),P\big)&=D+\widetilde{A}^{s}\odot B^{u},\\
\mathcal{U}_{\tilde f^{u}}\big(D,S(f^{s}),P\big)&=D+A^{s}\odot \widetilde{B}^{u},\\
\mathcal{U}_{\tilde f^{u}}\big(D,S(\tilde f^{s}),P\big)&=D+\widetilde{A}^{s}\odot \widetilde{B}^{u}.
\end{align}
Substituting these four identities into \eqref{eq:component-interaction} gives
\begin{align}
\mathfrak{I}
={}&\left(D+A^{s}\odot B^{u}\right)-\left(D+\widetilde{A}^{s}\odot B^{u}\right)\nonumber\\
&-\left(D+A^{s}\odot \widetilde{B}^{u}\right)+\left(D+\widetilde{A}^{s}\odot \widetilde{B}^{u}\right)\nonumber\\
={}&\left(A^{s}-\widetilde{A}^{s}\right)\odot\left(B^{u}-\widetilde{B}^{u}\right).
\label{eq:interaction-factorization}
\end{align}
By \eqref{eq:path-difference-condition} and \eqref{eq:penalty-difference-condition}, there exists an edge $(i,j)$ for which
\begin{equation}
\left[A^{s}-\widetilde{A}^{s}\right]_{ij}\neq 0
\qquad\text{and}\qquad
\left[B^{u}-\widetilde{B}^{u}\right]_{ij}\neq 0.
\end{equation}
It follows from \eqref{eq:interaction-factorization} that
\begin{equation}
\mathfrak{I}_{ij}=
\left[A^{s}-\widetilde{A}^{s}\right]_{ij}
\left[B^{u}-\widetilde{B}^{u}\right]_{ij}\neq 0,
\end{equation}
and hence $\mathfrak{I}\neq 0$.

It remains to prove that \eqref{eq:additive-decomposition} cannot hold. Suppose, to the contrary, that there exist $\mathcal{G}^{s}$ and $\mathcal{G}^{u}$ satisfying \eqref{eq:additive-decomposition} for every component combination. By Definition~\ref{def:component-interaction},
\begin{align}
\mathfrak{I}
={}&\left[\mathcal{G}^{s}(f^{s};D,P)+\mathcal{G}^{u}(f^{u};D,P)\right]
-\left[\mathcal{G}^{s}(\tilde f^{s};D,P)+\mathcal{G}^{u}(f^{u};D,P)\right]\nonumber\\
&-\left[\mathcal{G}^{s}(f^{s};D,P)+\mathcal{G}^{u}(\tilde f^{u};D,P)\right]
+\left[\mathcal{G}^{s}(\tilde f^{s};D,P)+\mathcal{G}^{u}(\tilde f^{u};D,P)\right]\\
={}&0,
\end{align}
which contradicts $\mathfrak{I}\neq 0$. Therefore, \eqref{eq:additive-decomposition} is impossible, and the initialization and penalty-construction components are non-separable at the search-state-transition level.
\end{proof}

\subsection{Proof of Theorem~\ref{thm:blueprint}}
\label{app:proof-blueprint}
\begin{proof}
For joint generation, \eqref{eq:blueprint-faithfulness} implies that, conditional on any blueprint $Z=z$,
\begin{equation}
\zeta^{s}(F^{s,J})=z,
\qquad
\zeta^{u}(F^{u,J})=z
\end{equation}
hold simultaneously with probability one. Hence,
\begin{equation}
\Pr\!\left((F^{s,J},F^{u,J})\in\mathcal{C}\mid Z=z\right)=1,
\qquad \forall z\in\mathcal{Z}.
\end{equation}
Applying the law of total probability yields
\begin{align}
\Pr\!\left((F^{s,J},F^{u,J})\in\mathcal{C}\right)
&=\sum_{z\in\mathcal{Z}}\Pr(Z=z)\Pr\!\left((F^{s,J},F^{u,J})\in\mathcal{C}\mid Z=z\right)\nonumber\\
&=\sum_{z\in\mathcal{Z}}\pi_{\theta}(z\mid x)=1,
\end{align}
which proves \eqref{eq:joint-consistency-probability}.

For independent generation, \eqref{eq:independent-blueprints} and \eqref{eq:blueprint-faithfulness} imply that
\begin{equation}
(F^{s,I},F^{u,I})\in\mathcal{C}
\Longleftrightarrow Z_s=Z_u.
\end{equation}
Therefore,
\begin{align}
\Pr\!\left((F^{s,I},F^{u,I})\in\mathcal{C}\right)
&=\Pr(Z_s=Z_u)\nonumber\\
&=\sum_{z\in\mathcal{Z}}\Pr(Z_s=z,Z_u=z)\nonumber\\
&=\sum_{z\in\mathcal{Z}}\pi_{\theta}(z\mid x)^2,
\end{align}
where the last equality follows from the independence of $Z_s$ and $Z_u$. Taking the complement proves \eqref{eq:mismatch-probability}.

Finally, if there exist distinct $z_1,z_2\in\mathcal{Z}$ such that
\begin{equation}
\pi_{\theta}(z_1\mid x)>0,
\qquad
\pi_{\theta}(z_2\mid x)>0,
\end{equation}
then $\sum_{z\in\mathcal{Z}}\pi_{\theta}(z\mid x)^2<1$. Thus, the mismatch probability in \eqref{eq:mismatch-probability} is strictly positive.
\end{proof}

\end{document}